\newcommand\keywordsname{Key words}
\newcommand{\de}{\,\text{d}}
\newcommand{\R}{\mathbb{R}}
\newcommand{\Ex}{\mathbb{E}}
\newcommand{\bfc}{\mathbf{c}}
\newcommand{\bfu}{\mathbf{u}}
\newcommand{\bfZ}{\mathbf{Z}}
\newcommand{\bbQ}{\mathbb{Q}}
\newtheorem{theorem}{Theorem}[section]
\newtheorem{example}{Example}
\newtheorem{remark}{Remark}[section]
\title{\sc Probabilistic operator learning: \\ generative modeling and uncertainty quantification 
for foundation models of differential equations}
\author[a]{Benjamin J. Zhang$^\ast$}
\author[b]{Siting Liu$^\ast$}
\author[c]{Stanley J. Osher}
\author[d]{Markos A. Katsoulakis}
\affil[a]{Division of Applied Mathematics, Brown University}
\affil[b]{Department of Mathematics, University of California, Riverside}
\affil[c]{Department of Mathematics, University of California, Los Angeles}
\affil[d]{Department of Mathematics and Statistics, University of Massachusetts Amherst}
\def\blfootnote{\gdef\@thefnmark{}\@footnotetext}
\newenvironment{@abssec}[1]{
     \if@twocolumn
       \section*{#1}
     \else
       \vspace{.05in}\footnotesize
       \parindent .0in
         {\upshape\bfseries #1. }\ignorespaces
     \fi}
     {\if@twocolumn\else\par\vspace{.1in}\fi}
\newenvironment{keywords}{\begin{@abssec}{\keywordsname}}{\end{@abssec}}
\begin{document}

\maketitle

\begin{abstract}
In-context operator networks (ICON) are a class of operator learning methods based on the novel architectures of foundation models. Trained on a diverse set of datasets of initial and boundary conditions paired with corresponding solutions to ordinary and partial differential equations (ODEs and PDEs), ICON learns to map example condition-solution pairs of a given differential equation to an approximation of its solution operator. Here, we present a probabilistic framework that reveals ICON as implicitly performing Bayesian inference, where it computes the mean of the posterior predictive distribution over solution operators conditioned on the provided context, i.e., example condition-solution pairs. The formalism of random differential equations provides the probabilistic framework for describing the tasks ICON accomplishes while also providing a basis for understanding other multi-operator learning methods. This probabilistic perspective provides a basis for extending ICON to \emph{generative} settings, where one can sample from the posterior predictive distribution of solution operators. The generative formulation of ICON (GenICON) captures the underlying uncertainty in the solution operator, which enables principled uncertainty quantification in the solution predictions in operator learning.

 \end{abstract}

\begin{keywords}{Probabilistic operator learning, Generative modeling, Uncertainty quantification, In-context learning,  Bayesian inference, Generative ICON }
\end{keywords}
\blfootnote{$^\ast$Equal Contribution. Emails: \texttt{benjamin\_zhang@brown.edu}, \texttt{sitingl@ucr.edu}, \texttt{sjo@math.ucla.edu}, \texttt{markos@umass.edu}. M. Katsoulakis and B. Zhang are partially funded by AFOSR grant FA9550-21-1-0354. M.K. is partially funded by  NSF DMS-2307115.}

\normalsize

\section{Introduction}
\label{sec:intro}

Operator learning is a class of machine learning methods for approximating solutions to ordinary and partial differential equations. Methods such as Fourier neural operators \cite{li2021fourier}, DeepONets \cite{lu2019deeponet}, in-context operator networks (ICON) \cite{yang2023context,yang2023fine,yang2024pde}, and related transformer-based operator learning methods \cite{herde2024poseidon}
represent many scientific machine learning approaches for solving differential equations. These efforts are part of a broader goal of developing \emph{foundation models} or \emph{multi-operator learning} for differential equations, i.e., single pre-trained models that learn from a collection of many different types of differential equations that can then be fine-tuned to efficiently solve other differential equations \cite{sun2024towards,herde2024poseidon,subramanian2024towards,liu2024prose,liu2025bcat, bodnar2025foundation,hao2024dpot,li2021fourier,mccabe2023multiple}. 

Many operator learning methods, including FNOs and DeepONets, are trained using \emph{supervised learning}, where a mapping is learned from given input data that contains model parameters along with their initial and boundary value data to given output data that consist of solutions \cite{kovachki2024operator}. In contrast, ICON is trained to \emph{learn context}, where the model is not explicitly trained on parameters but instead relies only on example initial-boundary value data and corresponding solution pairs. The ICON architecture is based on transformers, the neural network architecture that underpins large languages models \cite{radford2019language,brown2020language}, which have shown to be capable of solving multiple types of tasks given a particular context defined by example `condition-solution' pairs. ICON is an example of a foundation model for differential equations as it is pre-trained on a variety of ODE and PDE problems, and then fine-tuned to learn specific solution operators given only example pairs of initial-boundary value data and solutions. Our probabilistic formulation of ICON provides a model for \emph{understanding} the mathematics of fine-tuning of foundation models for differential equations. This ability to fine-tune ICON given new example pairs to learn solution operators of unseen differential equations is why \cite{yang2023context,yang2023fine} describes it as an \emph{operator learner} rather than an \emph{operator approximator}: it adapts on-the-fly to new contexts, learning the correct operator without being given explicit information about the underlying model parameters. 

%\MK{[elaborate slightly more b/c this is an important point: what we learn in each case]}

% \textcolor{blue}{Say something about how random differential equations are crucial for foundation models and operator learning etc. }

To understand ICON, operator learning, and foundation models of differential equations more generally, we present the notion of \emph{probabilistic} operator learning, which requires modeling the \emph{data generating process} that produces the training datasets. For operator learning, we model this data generating process through the formalism of \emph{random differential equations} (RDEs) \cite{arnold1995random}, upon which we may understand the connections of operator learning and ICON to conditional expectations, least-squares estimators, and Bayesian inference. Random differential equations provide a natural mathematical framework for understanding not just ICON, but many foundation models trained on data from many different differential equations and initial and boundary data. The probabilistic setting also provides a rigorous way of describing confidence in model predictions by characterizing model \emph{uncertainty}. 

ICON learns a function that maps example pairs of input-output data of a differential equation along with a new unseen input, and maps it to the predicted output. This is in contrast to traditional operator learning methods in which a function maps inputs and operator parameters to solution outputs. Our primary contribution is showing that ICON directly approximates the expected value of a posterior predictive distribution, bypassing an explicit characterization of a Bayesian posterior distribution, meaning that ICON performs Bayesian inference implicitly. In other words, ICON infers the appropriate operator parameters given the example pairs and then applies the learned operator to the new inputs to predict the corresponding output in a single step. In-context learning more broadly has been understood to approximate the posterior predictive distribution \cite{xie2022an}, although this interpretation has only been studied in the context of large language models. 

With the knowledge that ICON approximates the \emph{mean} of the posterior predictive distribution, our second main contribution is adapting ICON to \emph{generate} from the posterior predictive distribution. In the context of ICON, the posterior predictive characterizes the distribution of solutions to the differential equation conditioned on example data, meaning that its statistics describes the uncertainty of the predictions as well. Tools from \emph{conditional generative modeling} are needed to generate from the posterior predictive, and we show that being able to produce samples from it provides a way to quantify uncertainty in ICON's predictions. Generating from the posterior predictive distribution is an example of \emph{conditional generative modeling}. When training ICON, or foundation models for differential equations more broadly, we are given input-output pairs of conditions and solutions which come from some joint probability measure. Conditional generative modeling takes input-output pairs to characterize the distribution of outputs given unseen inputs. ICON takes this a step further where it can characterize the distribution of solutions given new conditions and input-output data that inform the model parameters.

We show being able to generate samples from the posterior predictive naturally provides a way to quantify uncertainty in ICON's predictions. As ICON performs Bayesian inference through only joint samples from the inputs and outputs, it is performing a particular version of \emph{likelihood-free inference} \cite{Cranmer2019SBI}. The Bayesian perspective of ICON can provide further insights even for deterministic versions of ICON. In particular, through numerical examples, we show that the Bayesian perspective can inform how many example pairs are necessary in the ICON model, and how ICON can be robust to non-identifiability in differential equations. Moreover, we show ICON is naturally suited to solve Bayesian inference problems involving differential equations without additional computational machinery. 

The paper is organized as follows. In Section~\ref{sec:rdes} we introduce the random differential equations formalism for general ODEs and PDEs, with an emphasis on the probability measures induced by RDEs. The study of the probability measures of RDEs is both natural and crucial for understanding operator learning and extending it to conditional generative modeling.  In Section~\ref{sec:probicon} we present the training objective of in-context operator networks within the RDEs framework; in particular, we place ICON in the context of posterior and posterior predictive distributions, as well as its connections to likelihood-free inference. With the connection between the posterior predictive distribution and ICON established, Section~\ref{sec:genicon} presents a \emph{generative} version of ICON, combining conditional generative modeling with in-context operator learning to produce samples from the posterior predictive distribution, and show that it intrinsically provides uncertainty quantification for the predictions made by ICON. We illustrate computational insights that can be gained through the probabilistic and generative perspective of ICON in Section~\ref{sec:numerical}.

\subsection{Contributions}
We organize the rest of the paper according to our contributions. 
\begin{itemize}
    \item We present the formalism of random differential equations (RDEs) to understand operator learning methods in a probabilistic setting in Section~\ref{sec:rdes}. RDEs are a natural mathematical framework for understanding all foundation models for differential equations trained on data from many different types of ODEs and PDEs and initial and boundary data.  
    
    % \MK{Also: RDEs are a natural mathematical framework for all foundation models trained on data from many different types of differential equations and initial/boundary data}

    \item Using the RDE formalism, we explain in-context operator learning as implicitly performing Bayesian inference, in which ICON approximates the mean of the posterior predictive distribution. In Section~\ref{sec:probicon}, we describe in-context learning in probability spaces with Banach and Hilbert space valued random variables. Since solutions of differential equations reside in function spaces, it is natural --- and indeed, necessary --- to adopt this setting as model predictions will also reside in these spaces. This \emph{functional analytic} framework provides the foundation for understanding when least-squares predictors coincide with conditional expectations in operator learning--based approaches to differential equations.

    \item We present a generative form of in-context operator learning that is able to generate from the posterior predictive distribution. This is a form of uncertainty quantification, which describes the level of confidence ICON has in prediction. Building on the abstract formulation in Section~\ref{sec:probicon}, we establish the existence and properties of a generative ICON method in Section~\ref{sec:genicon}, as well as its ability to perform uncertainty quantification. Numerical examples that demonstrate our approach are discussed in Section~\ref{sec:numerical}. 
\end{itemize}

\section{Random differential equations formalism for probabilistic operator learning and foundation models of differential equations} \label{sec:rdes}

As we discussed in Section~\ref{sec:intro}, probabilistic operator learning requires us to define the data generating process that produces the training data for operator learning. Random differential equations provides the probabilistic model that generates this training data, and provides the proper formalism for which many operator learning methods can be understood. In particular, we highlight the probability measures that are induced by random ODEs, boundary value problems, and time-dependent PDEs. Solutions to differential equations are elements of function spaces, i.e., Banach and Hilbert spaces, and therefore, the probability measures we study are defined on Banach and Hilbert spaces.

\subsection{Random differential equations}

Fix a probability space $(\Omega, \mathcal{F},\mathbb{P})$ and let $V$ and $W$ be separable Banach spaces. Let $u(t,\omega)$ be a $V$--valued stochastic process over time period $t \in [0,T]$ evolving according to random evolution process
\begin{align}\label{eq:generalpde}
\begin{cases}
        \frac{\partial}{\partial t} u(t,\omega) + \mathcal{A}[t,u(t,\omega),\omega] =F(t,\omega) \\
    \mathcal{B}[u(t,\omega),\omega] = G(\omega) \\
    u(0,\omega) = u_0(\omega),
    \end{cases}
\end{align}
where $\mathcal{A}: [0,T] \times D(\mathcal{A}) \times \Omega \to V$ is a random operator that governs the system's dynamics and $D(\mathcal{A}) \subset V$ is its domain, $F(t,\cdot): \Omega \to  V$ is a random source term (a $V$-valued random variable),  $\mathcal{B}: D(\mathcal{A}) \times \Omega \to W$ describes the system's boundary conditions $G:\Omega \to W$ is random boundary data, and $u_0:\Omega \to V$ is random initial data. For a realization $\omega \in \Omega$, the system generally describes a single initial-boundary value PDE problem. The randomness here serves mainly to model the ensemble of operators, initial and boundary data, and source terms.  Here, we do not consider randomness caused by stochastic forcing, and therefore do not require a filtration. A simple example to consider is when the randomness only arises from the variability of parameters of a fixed family of random differential equations. 

For a fixed $\omega$, the operator $\mathcal{A}[\cdot,\cdot,\omega]$ can be understood as the generator of a time-inhomogeneous nonlinear semigroup. The solution $u(t,\omega)$ is deterministically coupled to the operators $\mathcal{A}[\cdot,\cdot,\omega], \mathcal{B}[\cdot,\omega]$, initial data $u_0(\omega)$, source term $F(t,\omega)$, and boundary value data $G(\omega)$. The corresponding probability measure of $u(t,\omega)$ encodes the relationship between these random variables. To elucidate the random differential equations framework for operator learning, we give three specific examples in the following sections.

\subsection{Random ODEs} \label{sec:randomode}
For a general random ODE taking values in $V = \R^d$, we have $u(t,\omega)$ to be a $\R^d$-valued stochastic process. A random ODE has no boundary operator conditions, and the random evolution operator $\mathcal{A}$ instantiates as a random vector field $\mathcal{A}[t,u(t,\omega),\omega] = -a(t,u(t,\omega),\omega)$,
\begin{align}\label{eq:randomode}
\begin{cases}
    \frac{d}{dt}u(t,\omega) = a(t,u(t,\omega),\omega)  \\
    u(0,\omega) = u_0(\omega)
\end{cases}
\end{align}
for $t\in [0,T]$. This random ODE can be understood as a collection of ODEs, indexed by $\omega$, characterized by the random vector field $a: [0,T] \times \R^d \times \Omega \to \R^d$. For a fixed $\omega$, \eqref{eq:randomode} is simply an initial-value problem where $a(\cdot,\cdot,\omega)$ is an element of $\mathcal{C}([0,T]\times\R^d; \R^d)$, the space of continuous vector fields taking value in $\R^d$, $u_0(\omega)\in \R^d$ is an initial condition, and the solution $u(\cdot,\omega)$ is an element of a function space, in this paper we assume the space to be ${L}^2(0,T)$. 

The \emph{joint random variable} $(a(\cdot,\cdot,\omega),u_0(\omega),u(\cdot,\omega))$ induces a pathspace probability measure $\bbQ_{a,u_0,u}(\cdot)$ defined on the function spaces to which $u(\cdot,\omega)$ and $a(\cdot,\cdot,\omega)$ belong. The full pathspace measure is neither insightful nor practical to study directly as training data, in practice, is provided through discrete representations. ODEs are first discretized in time and then solved numerically; a more useful measure to study is a marginal distribution $\mathbb{Q}^N$ that is defined at a discrete set of time steps $0 = t_0 < t_1 < \ldots < t_N = T$. Moreover, the Markovian structure of differential equations can be highlighted by expressing the measure as follows
\begin{align}
\mathbb{Q}^N(\de a, \de u_0, \ldots, \de u_N) = \mathcal{Q}_0(\de u_0) \mathcal{Q}_a(\de a) \prod_{n = 0}^{N-1} \mathcal{K}(\de u_{n+1}|u_n,a),
\end{align}
where $u_i(\omega) = u(t_i,\omega) \in \R^d$ and $\mathcal{Q}_a$ and $\mathcal{Q}_0$ are the marginal distributions of $a(\cdot,\cdot,\omega)$ and $u_0(\omega)$, respectively, in which we assume they are independent. Here, $\mathcal{K}(\de u_{n+1}|u_{n},a)$ is the transition probability kernel which models the forward evolution of the ODE. The evolution of ODEs is deterministic, so this kernel is typically a delta function centered at the solution of the next time step. Since ODEs are only solved approximately, $\mathbb{Q}^N$ cannot be truly accessed and can only be approximated by some approximate measure $\tilde{\mathbb{Q}}^N$.

\begin{example} \label{ex:ode}
Consider the random ODE from the third ODE example in \cite{yang2023context}
\begin{align} \label{eq:ex1}
\begin{cases}
    \frac{d}{dt}u(t,\omega) = \gamma_1(\omega) c(t,\omega) u(t,\omega) + \gamma_2(\omega), \\
    u(0,\omega) = u_0(\omega)
    \end{cases}
\end{align}
where $u(t,\omega) \in \R$ is a function of time, $c(t,\omega)$ is a Gaussian process, and $\gamma_1(\omega)$, $\gamma_2(\omega)$, and $u_0(\omega)$ are $\R$--valued uniform random variables. The random function $a(t,u,\omega) = \gamma_1(\omega) c(t,\omega) + \gamma_2(\omega)$ is parametrized by $\gamma_1, \gamma_2$, and $c(t,\omega)$. In practice, Gaussian processes are approximated by finite dimensional Gaussian vectors $\mathbf{c}\in \R^{\mathbf{N_c}}$, and so the random function $a$ is modeled by a finite dimensional random vector. The probability measure that arises from \eqref{eq:ex1} can be represented in terms of a \emph{density} $Q^N$ 
\begin{align}
    \mathbb{Q}^N&(d\gamma_1,\de \gamma_2,\de \mathbf{c}, \de u_0,\de \mathbf{u}) = Q^N(\gamma_1,\gamma_2,\mathbf{c}, u_0, \mathbf{u}) \de \mathbf{W} \\
    & = Q_{\gamma_1}(\gamma_1) Q_{\gamma_2}(\gamma_2) Q_{\bf{c}}(\bfc) \prod_{n = 0}^{N-1}K(u_{n+1}| u_n, \mathbf{c}, \gamma_1,\gamma_2) \de \mathbf{W}, \nonumber
\end{align}
where $\bf{u} \in \R^N$,  $\mathbf{u}_i = u_i$, and  $d\mathbf{W} = d\gamma_1 \,d\gamma_2\,d\mathbf{c}\,du_0\,d\mathbf{u}$. Here, $K$ is a Dirac measure concentrated at the deterministic solution update.

\end{example}

\subsection{Random boundary value problems}
Here we consider random boundary value problems (BVPs). The solution no longer evolves in time, however $V$ and $W$ are function spaces defined over some domain $E \subset \R^d$, for example, $V = L^2(E)$ and $W = L^2(\partial E)$. Generally, we may consider equations of the form 
\begin{align}
\begin{cases}
    &\mathcal{A}[u(x,\omega),\omega] = F(\omega),\, x \in E \\
    &\mathcal{B}[u(x,\omega),\omega] = G(\omega),\, x \in \partial E
    \end{cases}
\end{align}
where for a fixed $\omega$, the solution $u(\cdot,\omega): E \to \R$ is an element of $V$ in the domain while $u(\cdot,\omega): \partial E \to \R$ is an element of $W$ on the boundary. For example, for a steady-state random reaction-advection-diffusion equation, $V = L^2(E)$, $W = L^2(\partial E)$, $\mathcal{A}[u(x,\omega),\omega] = -\nabla \cdot(\kappa(x,\omega) \nabla u(x,\omega) ) + r(u(x,\omega),\omega) + \nabla \cdot f(x,u(x,\omega),\omega)$, and $\mathcal{B}[u(x,\omega),\omega] = u(x,\omega) $ where $\kappa(x,\omega)$ is the permeability function, $r(u,\omega)$ is the reaction function, $f(x,u,\omega)$ is the flux function, $F(\omega) = s(x,\omega)$ is the source term, and $G(\omega) = g(x,\omega)$ is the boundary data. 

\begin{example}\label{ex:bvprd}
    Consider the random reaction-diffusion equation on $E = [0,1] \subset \R$ from Example 13 in \cite{yang2023context}, which has the form 
    \begin{align} \label{eq:exbvp}
        -0.1 a(\omega) \frac{d^2}{dx^2} u(x,\omega) + k(x,\omega) u(x,\omega) = c(x,\omega), \, x\in [0,1] \\
        u(0,\omega) = u_0(\omega), u(1,\omega) = u_L(\omega). 
    \end{align}
Here, coefficient $a(\omega)$ and boundary terms $u_0(\omega), u_L(\omega)$ are $\R$--valued random variables, $u(x,\omega)\in \R$ for $x \in [0,1]$. The functions $k(x,\omega), c(x,\omega)$ are modeled with Gaussian processes, which can be approximated by finite dimensional Gaussian vectors $\mathbf{k} \in \R^{N_k}$, $\mathbf{c} \in \R^{N_c}$.  Consider a discretization of the domain over points $0 = x_0< x_1< \cdots< x_{N} = 1$. The resulting probability measure that arises from \eqref{eq:exbvp} can be represented in terms of a density
\begin{align}
    \mathbb{Q}^N(\de a, \de \mathbf{c} , \de \mathbf{k}, \de u_0, \de u_L, \de \mathbf{u}) = Q^N(  a,  \mathbf{c} , \mathbf{k},  u_0,  u_L, \mathbf{u}) \de \mathbf{W},
\end{align}
where $\bfu(\omega) \in \R^N$, $\bfu_i(\omega) = u(x_i, \omega)$, and $\de\mathbf{W} = \de a\de\mathbf{c}\de\mathbf{k}\de u_0 \de u_L \de\mathbf{u}$. Solutions to boundary value problems are typically approximated by finite difference or element schemes, where the solution at a particular stencil point or element is explicitly dependent only on a local neighborhood of the point or element. In this example, one would typically use a central difference scheme so that the density can be expressed with the following conditional independence structure
  \begin{align}
        Q^N(a,\bfc,\mathbf{k},u_0, u_L,\bfu) = Q_a(a) Q_\bfc(\bfc) Q_{\mathbf{k}}(\mathbf{k}) Q_0(u_0) Q_L(u_L)  \times \prod_{n = 1}^{N-1} K(u_n|u_{n-1},u_{n+1},a,\bfc,\mathbf{k} ).
    \end{align}
The kernel function $K$ here models a \emph{Markov random field}, so the dependence structure is not sequential but is a graphical model. This kernel function is still a Dirac measure since the dependence of the solutions on the stencils given fixed parameters is deterministic. While we do not explicitly explore this structure in this paper, it will be of interest for future work. There is extensive work on how graph structure enhances inference, uncertainty quantification, and generative modeling \cite{Birmpa2022PGM}, which can be explored to enhance operator learning. Autoregressive models also exhibit graph structure.

\end{example}

\subsection{Random PDEs}
We consider the class of conservation law PDEs which are time-evolving but have no boundary conditions. In \eqref{eq:generalpde}, we have $V = BV(\R^d)$, where $BV$ is the space of functions of bounded variation on $\R^d$.  A random conservation law has the form 
\begin{align}
\begin{cases}
    \frac{\partial}{\partial t} u(t,x,\omega) + \nabla \cdot f(u(t,x,\omega), \omega) = 0 \\
    u(0,x,\omega) = u_0(x,\omega) \nonumber
    \end{cases}
\end{align}
where $f(u,\omega)$ is the random flux function taking value in $V$. Here $x \in \R^d$ and $u(x,\omega) \in \R$. The initial condition $u_0(x,\omega)$ is a random function on $\R^d$ taking value in $\R$. This example is similar to the random ODE in Section~\ref{sec:randomode}, except that $u(t,\cdot,\omega)$ is now a $BV(\R^d)$ space valued stochastic process rather than one on $\R^d$. 

\begin{example}
The parametrized family of conservation laws considered in $\cite{yang2024pde}$ models the flux function as a polynomial in $u$, $f(u,\omega) = a(\omega) u^3 + b(\omega) u^2 + c(\omega) u,$. The coefficients $a(\omega), b(\omega), c(\omega)$ are $\R$--valued uniform random variables and the initial condition is a Gaussian process. The domain is $[0,1]\subset \R$. Suppose we discretize time $0 = t_0<t_1<\cdots< t_N = T$ and space  $0= x_1 <\ldots< x_M = 1$ and define the collection of random vectors $\{\mathbf{u}_n(\omega) \}_{n = 0}^N$ such that $\mathbf{u}_n^m(\omega) = u(t_n,x_m,\omega)$. The probability measure associated with this random conservation law has both a Markov property and a graph structure in space when discretized. We have that the measure and density are 
\begin{align}
\mathbb{Q}^{N,M}(\de a,\de b,\de c,\de\bfu_0,\ldots,\de\bfu_N) &=  Q^{N,M}( a,b,c, \bfu_0, \ldots, \bfu_N) \de \mathbf{W}  \nonumber \\
   & = Q_{a,b,c}(a,b,c) Q_0(\bfu_0) \prod_{n = 0}^{N-1} K(\bfu_{n+1}|\bfu_n, a,b,c) d\mathbf{W}\nonumber \\
    & = Q_{a,b,c}(a,b,c) Q_0(\bfu_0) \prod_{n = 0}^{N-1} \prod_{m = 1}^{M-1} K(\bfu_{n+1}^m|\bfu_n^{m-1},\bfu_n^m, \bfu_n^{m+1}, a,b,c)d\mathbf{W},
\end{align}
where $\de\mathbf{W} = \de a \de b \de c \de \mathbf{u_0} \ldots, \de\mathbf{u}_N$, $\mathbf{u}_i \in \R^{M+1}$.
\end{example}

The random differential equations formalism describes the data generating process for foundation models of differential equations that are established through operator learning. The crucial aspect is that the data generating measure is one that is defined on Banach space valued random variables. In the next section, we study in-context operator learning in this probabilistic setting.

\section{Probabilistic formulation of in-context operator learning}\label{sec:probicon}

We first provide an abstract probabilistic formulation that is suitable for understanding ICON, describe its training procedure, and describe its connections to Bayesian inference. In-context learning in the setting of language has been studied probabilistically \cite{xie2021explanation}, and transformers have been noted to perform Bayesian inference \cite{muller2022transformers}. Since ICON applies in-context learning to inputs and outputs of differential equation, a probabilistic understanding of it requires describing ICON in terms of Banach and Hilbert space valued random variables. Placing ICON in a functional analytic framework also connects it to other operator learning theories, for example by enabling the study of whether it exhibits discretization invariance \cite{kovachki2023neural}.

\subsection{In-context operator learning and its training procedure}

We model the parameters, conditions, and quantities-of-interest in differential equation as \emph{random variables.} These random variables take values in Banach and Hilbert spaces as inputs and solutions to random differential equations are elements of function spaces. For full generality and rigor, let $A, Y$ be separable Banach spaces and $Z$ to be a separable Hilbert space\footnote{Separability allows for defining regular conditional probabilities and the disintegration of measures \cite{bogachev2007measure,vakhania2012probability,kallenberg2021foundations}.}. We refer the reader to \cite{vakhania2012probability,ledoux2013probability,bogachev2007measure,kallenberg2021foundations} for background on probability theory on Banach spaces. The Hilbert space structure of $Z$ is crucial for establishing the link between least squares predictors and conditional expectations, which we discuss in Theorem~\ref{thm:ICON_conditional}. Since these random variables are Banach space valued, integration with respect to their probability measures are understood as Bochner integrals \cite{bogachev2007measure}. 

Denote the parameter, condition, and QoI random variables to be $\alpha: \Omega \to A$, $y: \Omega \to Y$, and $z: \Omega \to Z$, respectively. The random variables have a joint probability measure $\mathbb{P}_{\alpha, y, z}$ that factorizes according to the following conditional independence structure 
\begin{align}
    \mathbb{P}_{\alpha, y, z} = \mathbb{P}_\alpha \otimes \mathbb{P}_{y} \otimes \mathbb{P}_{z| y, \alpha}.
\end{align}
This emphasizes the structure of RDEs where the conditions and parameters are drawn independently, while the QoI is generated by solving the RDE given fixed conditions and parameters. 

Depending on the specific problem at hand, different components of an RDE corresponds with different random variables in ICON. For instance, for the \emph{forward} problem in Example~\ref{ex:ode}, the parameters $\alpha(\omega) = (\gamma_1(\omega),\gamma_2(\omega))$, the conditions are $y(\omega)= (c(t,\omega), u_0(\omega))$, and the QoIs are the solutions $z(\omega) = u(t,\omega)$. While for inverse problems, the conditions and QoIs are exchanged --- $y(\omega) = (u_0(\omega),u(t,\omega))$, $z(\omega) = c(t,\omega)$.

During the data generation step, we produce i.i.d. samples from the joint distribution 
\begin{align}\label{eq:trainingdata}
    \left\{\left\{ (\alpha_m, y_m^j, z_m^j) \right\}_{j = 1}^J \right\}_{m = 1}^M \sim \mathbb{P}_{\alpha,y,z} = \mathbb{P}_\alpha \otimes \mathbb{P}_{y} \otimes \mathbb{P}_{z| y, \alpha}.
\end{align}
Sampling from the joint measure is done in a two step process: first $M$ operators, represented by the parameters $\alpha_{m}$, are sampled independently from $\mathbb{P}_\alpha$. For each parameter $\alpha_m$, $J$ conditions $y^{j}_m$ are sampled from $\mathbb{P}_y$. For each condition, the QoI $z^j_m$ is sampled from the conditional measure $Q(z|y^j_m,\alpha_{m})$.  For certain differential equations, QoIs are deterministically coupled to $y^j_m$ and $\alpha_{m}$, so the conditional measure is sampled by solving the ODE or PDE at hand. During the training of ICON, however, information about the parameters is omitted, so ICON is trained on data from the marginal distribution $\mathbb{P}_{y,z}$:
\begin{align}\label{eq:yz_marginal_samples}
    \left\{ \left\{(y^{j}_m, z^{j}_m) \right\}_{j = 1}^J \right\}_{m = 1}^M  \sim \mathbb{P}_{y,z}.
\end{align}
For differential equations, ICON is a model that learns the solution operator of a differential equation given only examples condition-QoI pairs without additional training. Formally, ICON is a mapping
\begin{align} \label{eq:icon_mapping}
    \mathcal{T}_\theta: Y \times (Y\times Z)^{J-1} \to Z 
\end{align}
whose inputs is a set of example condition-QoI pairs $\{(y^j_m, z^j_m)\}_{j = 1}^{J-1}  \sim (Y\times Z)^{J-1}$ corresponding to the same parameter $\alpha_m$, along with a \emph{new} condition $y_m^J\in Y$ and outputs a prediction for the QoI $z^{J}_m \in Z$ that best corresponds with the condition. The example condition-QoI pairs informs the `context' which is described by the parameter $\alpha^m$. The ICON mapping $\mathcal{T}_\theta$ has a transformer architecture, which has been the basis for the success of large language models today \cite{radford2019language}. The connections between transformers, in-context learning, and Bayesian inference has been noted in previous work \cite{muller2022transformers, ye2024exchangeable}.

ICON has been implemented with transformers with encoder-decoder architectures \cite{yang2023context}, where the encoder portion understands the context based on the example pairs, and the decoder portion makes the prediction. Decoder only architectures, however, have also been found to be empirically successfully \cite{yang2023fine,yang2024pde}. A key strength of transformer networks is that they are able to take arbitrary number of inputs as arguments. 

The ICON transformer is trained by solving the least squares problem
\begin{align}\label{eq:icontraining}
    \min_\theta \frac{1}{M} \sum_{m = 1}^M \left\| z_m^J - \mathcal{T}_\theta\left( y_m^J ; \{y_m^j, z_m^j\}_{j = 1}^{J-1} \right)\right\|_Z^2.
\end{align}

In the next section, we show that the minimizer of this least squares problem approximates a conditional expectation, in particular, the mean of a posterior predictive distribution which arises in Bayesian inference.

\begin{remark}{(Probabilistic operator learning is agnostic to forward and inverse problems.) } \label{rmk:probopinf}
As formulated in \cite{kovachki2024operator}, deterministic operator learning aims to learn a mapping from parameters $\alpha$ and conditions $y$ to QoI solutions $z$ in which, for well-posed problems, the mapping is well-defined and is unique. This approach, however, has limitations for inverse problems, where multiple QoIs may correspond to the same parameter and condition. In the probabilistic interpretation, forward and inverse problems are addressable in the same framework. In Section~\ref{sec:icon_bayes} we will see that the Bayesian interpretation explains why ICON has been successful for both forward and inverse problems. 
  
\end{remark}

\subsection{Probability measures of in-context operator networks}

The key observation that reveals how ICON works is to see that the condition-QoI pairs that are evaluated by ICON $\{(y_m^j,z_m^j)\}_{j = 1}^{J}$ for a fixed $m$ are \emph{not} independent samples of $\mathbb{P}_{y,z}$. Rather, as they arise from the same parameter $\alpha_m$, they are independent samples of the conditional distribution $\mathbb{P}_{y,z}(\cdot|\alpha) = \mathbb{P}_{z|y,\alpha} (\cdot | y,\alpha) \otimes \mathbb{P}_{y}$. This dependence explains why example condition-QoI pairs, $\{(y^j, z^j )\}_{j = 1}^{J-1}$, informs the prediction of QoIs $z^J$ given new conditions $y^J$. 

To see this more clearly, the following probability distribution is another expression for the measure that generates the training data 
\eqref{eq:trainingdata}
\begin{align} \label{eq:jointparamex}
    \mathbb{Q}_{\alpha, y^1, z^1, \ldots y^J, z^J } = \mathbb{P}_{\alpha}\otimes \left(\mathbb{P}_{y^1} \otimes \mathbb{P}_{z^1 | y^1, \alpha}\right) \otimes \cdots \otimes \left(\mathbb{P}_{y^J} \otimes \mathbb{P}_{z^J | y^J, \alpha}\right).
\end{align}
Each tuple $\left(\alpha_m,\left\{ (y_m^j, z_m^j)\right\}_{j = 1}^J\right)$ is a single sample from $\mathbb{Q}_{\alpha, y^1, z^1, \ldots y^J, z^J }$. 
Figure~\ref{fig:icongraph} shows the graphical model that describes the dependencies between the parameter and $J$ condition-QoI pairs. ICON learns how to detect context by omitting $\alpha$ during training. Therefore, ICON only trains on data from the marginal distribution on $(Y\times Z)^{J}$
\begin{align}\label{eq:iconjointmeasure}
   \mathbb{Q}^J(\cdot) \coloneqq \mathbb{Q}_{y^1, z^1, \ldots, y^J, z^J}(\cdot) = \int \mathbb{Q}_{\alpha, y^1, z^1, \ldots, y^J, z^J}(d\alpha, \cdot).
\end{align}
We refer this measure as the \emph{ICON joint distribution} as it generates the training data for ICON. Crucially, this is distinct from the measure \eqref{eq:yz_marginal_samples} as the collection of $J$ example condition-QoI pairs all correspond to the same $\alpha$ while samples from \eqref{eq:yz_marginal_samples} are generated independently. From the joint distribution, we can derive the conditional distribution 
 \begin{align}\label{eq:probabilistic_ICON}
 \mathbb{Q}_{z^{J} \big| y^{J}, \{y^{j}, z^{j} \}_{j = 1}^{J-1}}\left(\,\cdot\, \big| y^{(J)}, \{y^{(j)}, z^{(j)} \}_{j = 1}^{J-1} \right),\end{align} 
 which is a probability distribution on $Z$. We refer this as the \emph{ICON conditional distribution}, which is the distribution of the predictions $z^J$ given condition $y^J$ and context informed by the example condition-QoI pairs $\{ (y^j, z^j) \}_{j = 1}^{J-1}$. The next theorem shows that ICON approximates the expectation of the conditional distribution, which is a point estimate for this distribution.

\begin{theorem}
    \label{thm:ICON_conditional} 
    Assume $z^J \in L^2(\mathbb{Q}^J; Z)$, i.e.\ $\Ex_{\mathbb{Q}^J}[\|z^J\|_Z^2] < \infty$, and let 
$\Sigma := \sigma\!\big(y^J,\{(y^j,z^j)\}_{j=1}^{J-1}\big)$. The mapping $\mathcal{T}^\star: Y\times (Y\times Z)^{J-1} \to Z$, where $\mathcal{T}^\star$ is the conditional expectation with respect to \eqref{eq:probabilistic_ICON} 
    \begin{align}  \label{eq:conditional_expectation_ICON} \mathcal{T}^\star\left(y^J; \{(y^j,z^j) \}_{j = 1}^{J-1} \right) \coloneqq \Ex\left[z^J \big| y^J, \{(y^j,z^j) \}_{j = 1}^{J-1} \right]\end{align}
is the unique minimizer of the mean-squared error with respect to \eqref{eq:iconjointmeasure}
\begin{align}
\mathcal{T}^\star 
= \arg\min_{\mathcal{T} \in L^2(\Sigma; Z)} 
\Ex_{\mathbb{Q}^J}\!\left[
\left\| z^J - \mathcal{T}\!\left(y^J;\{(y^j,z^j)\}_{j=1}^{J-1}\right)\right\|_Z^2
\right],
\end{align}
where $L^2(\Sigma;Z):=\{\,\mathcal{T}(y^J,\{(y^j,z^j)\}_{j=1}^{J-1}): \mathcal{T} \text{ is }\Sigma\text{-measurable and } \Ex\|\mathcal{T}\|_Z^2<\infty\,\}$.
\end{theorem}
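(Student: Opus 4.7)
The plan is to recognize Theorem 3.1 as the Hilbert-space valued analogue of the classical $L^2$ characterization of conditional expectation, and then carry out the standard orthogonal projection argument, being careful that $z^J$ takes values in a (separable) Hilbert space $Z$ so that integrals are Bochner integrals and conditional expectations are defined in the Bochner sense.

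First, I would set up the objects. Since $Z$ is a separable Hilbert space and $z^J \in L^2(\mathbb{Q}^J; Z)$, the Bochner conditional expectation $\mathbb{E}[z^J \mid \Sigma]$ exists, is unique up to $\mathbb{Q}^J$-a.s.\ equality, and by Jensen's inequality for Bochner integrals satisfies $\mathbb{E}[\|\mathbb{E}[z^J\mid\Sigma]\|_Z^2]\le \mathbb{E}[\|z^J\|_Z^2]<\infty$, so $\mathbb{E}[z^J\mid\Sigma]\in L^2(\Sigma;Z)$. By the Doob--Dynkin lemma applied to the $\Sigma$-measurable random element $\mathbb{E}[z^J\mid\Sigma]$ (valid since $Y$, $Z$ are separable metric spaces, so $\Sigma$ is countably generated), there exists a Borel measurable map $\mathcal{T}^\star:Y\times (Y\times Z)^{J-1}\to Z$ realizing (\ref{eq:conditional_expectation_ICON}), giving the candidate minimizer.

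Next, for an arbitrary $\mathcal{T}\in L^2(\Sigma;Z)$, I would perform the orthogonal decomposition
\begin{equation*}
\|z^J-\mathcal{T}\|_Z^2=\|z^J-\mathcal{T}^\star\|_Z^2+2\,\langle z^J-\mathcal{T}^\star,\mathcal{T}^\star-\mathcal{T}\rangle_Z+\|\mathcal{T}^\star-\mathcal{T}\|_Z^2,
\end{equation*}
take expectations with respect to $\mathbb{Q}^J$, and show that the cross-term vanishes. This is the key step: since $\mathcal{T}^\star-\mathcal{T}$ is $\Sigma$-measurable and $Z$-valued, and $z^J-\mathcal{T}^\star\in L^2(\mathbb{Q}^J;Z)$, Cauchy--Schwarz gives integrability of the inner product, and by the tower property together with the pull-out property for Bochner conditional expectation,
\begin{equation*}
\mathbb{E}\bigl[\langle z^J-\mathcal{T}^\star,\mathcal{T}^\star-\mathcal{T}\rangle_Z\bigr]=\mathbb{E}\Bigl[\bigl\langle \mathbb{E}[z^J-\mathcal{T}^\star\mid\Sigma],\,\mathcal{T}^\star-\mathcal{T}\bigr\rangle_Z\Bigr]=0,
\end{equation*}
because $\mathbb{E}[z^J\mid\Sigma]=\mathcal{T}^\star$ a.s. This yields the Pythagorean identity $\mathbb{E}\|z^J-\mathcal{T}\|_Z^2=\mathbb{E}\|z^J-\mathcal{T}^\star\|_Z^2+\mathbb{E}\|\mathcal{T}^\star-\mathcal{T}\|_Z^2$, from which $\mathcal{T}^\star$ is a minimizer, and any other minimizer must satisfy $\mathbb{E}\|\mathcal{T}^\star-\mathcal{T}\|_Z^2=0$, i.e.\ $\mathcal{T}=\mathcal{T}^\star$ in $L^2(\Sigma;Z)$, giving uniqueness.

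The main obstacle is not algebraic but foundational: justifying the pull-out property and tower property of conditional expectation in the Bochner setting, and validating the Doob--Dynkin representation so that the abstract $\Sigma$-measurable minimizer can be written as a deterministic map on $Y\times(Y\times Z)^{J-1}$ as required by (\ref{eq:icon_mapping}). Both hinge on separability of $Y$ and $Z$, which the excerpt has explicitly assumed; once those standard facts from Banach/Hilbert-valued probability theory (e.g.\ as in \cite{vakhania2012probability,bogachev2007measure}) are invoked, the projection computation above is routine and the remainder of the proof is bookkeeping.
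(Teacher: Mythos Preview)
Your proposal is correct and takes essentially the same approach as the paper: the paper's proof simply invokes the Hilbert space projection theorem and the characterization of conditional expectation as an $L^2$-projection (citing Kallenberg), which is exactly the argument you spell out explicitly via the Pythagorean decomposition. Your version is more detailed---in particular your care with Bochner conditional expectations and the Doob--Dynkin representation---but the underlying idea is identical.
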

\begin{proof}
This result follows from the Hilbert space projection theorem and the characterization of conditional expectation as an $L^2$-projection; see Theorem 1.35 and Section 8 in \cite{kallenberg2021foundations}. \end{proof}

When the least squares problem is approximated through a Monte Carlo approximation, we obtain the training objective in \eqref{eq:icontraining}. This perspective that ICON approximates the conditional expectation explains why it can be stably trained for both forward and inverse problems. As we discussed in Remark~\ref{rmk:probopinf}, deterministic operator learning does not naturally address problems where there are multiple quantities of interest that correspond to the same conditions and parameters. This probabilistic interpretation of ICON explains that it is able to approximate an expected value over the distribution of QoI outputs.

\begin{remark}
    ICON is capable of training with variable number of example-QoI pairs \cite{yang2023context,yang2023fine,yang2024pde}, i.e., the context size $J$ does not need to be fixed. This flexibility is due to the transformer architecture, which can process an arbitrary number of example pairs. While we describe our probabilistic framework under the assumption of a fixed $J$, it can be extended to accommodate for a variable $J$ as well. We leave a detailed treatment of this extension to future work. 
\end{remark}

\begin{figure*}
    \centering
    \includegraphics[width=\linewidth]{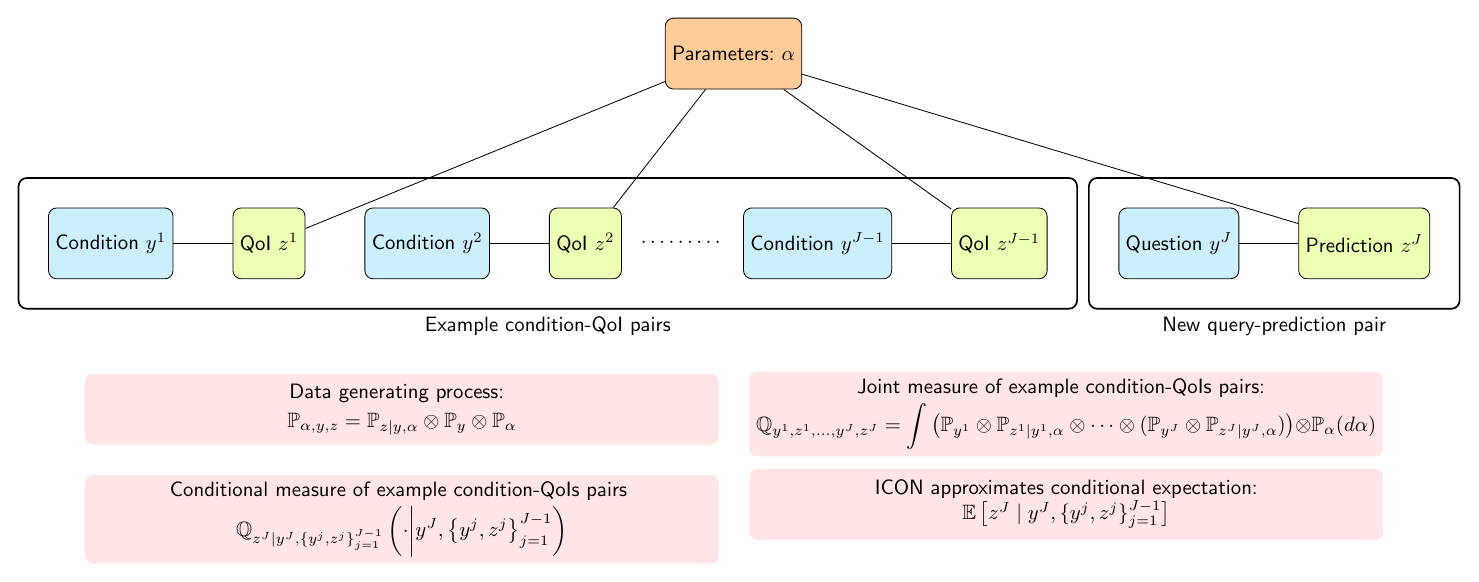}
    \caption{The ICON graphical model, corresponding to the probability measure described in \eqref{eq:jointparamex}. In practice, ICON is trained on condition-QoI pairs in the left box to make predictions given a new condition. The example condition-QoI pairs informs `context,' which is instantiated as hidden operator parameters that is learned through the example pairs. When producing the ICON training data, the parameters and conditions are generated independently given a fixed parameter. During training, the parameters are omitted, and the data organized as samples from the joint measure of example condition-QoI pairs. ICON approximates the conditional expectation of the conditional measure that arises from the joint measure. }
    \label{fig:icongraph}
\end{figure*}

\begin{remark}[Probabilistic formulation of classical operator learning]
To clarify the distinction between "classical" operator learning methods --- such as DeepONets \cite{lu2019deeponet}, Fourier neural operators \cite{li2021fourier}, and neural operators more broadly \cite{kovachki2023neural}--- and in-context operator learning, we briefly discuss the former in our probabilistic context. Classical operator learning also uses training data of the form \eqref{eq:trainingdata} with $J = 1$, but the key difference is that the operator parameters $\alpha_m$ are observed during training and serve as inputs to the operator. Formally, classical operator learning seeks a mapping
\begin{align}
    \mathcal{S}_\theta: Y \times A \to Z
\end{align}
trained on the least squares problem 
\begin{align}
    \min_\theta \frac{1}{M}\sum_{m = 1}^M \left\| z_m - \mathcal{S}_\theta(y_m, \alpha_m)\right\|_Z^2.
\end{align}
In the probabilistic terms, the optimal solution is the conditional expectation of the distribution $\mathbb{Q}_{z | y,\alpha}$, i.e., $\mathcal{S}^\star(y,\alpha) = \Ex[z | y,\alpha]$. 
\end{remark}

\subsection{In-context operator learning implicitly performs Bayesian inference}
\label{sec:icon_bayes}

We now explain how the ICON conditional distribution \eqref{eq:probabilistic_ICON} and the conditional expectation that ICON approximates in Theorem~\ref{thm:ICON_conditional} arises in Bayesian inference. Previous work \cite{falck2024context,muller2022transformers,ye2024exchangeable} have explained in-context learning and transformers as being able to approximate the expectation of the \emph{posterior predictive distribution} (PPD). Our work formulates in-context learning infinite dimensions, which is necessary for solutions of differential equations. 
 
 To see ICON's connection to Bayesian inference, consider the classical Bayesian approach for solving the abstract problem ICON addresses: given only example pairs $\left\{(y^j,z^j)\right\}_{j = 1}^{J-1}$, and a new condition $y^J$, we wish to predict $z^J$. Given a prior on the parameters $\mathbb{P}_\alpha$ and a likelihood model $\mathbb{P}_{z | y, \alpha}$, Bayesian inference produces a posterior distribution on the parameters, which models the distribution of parameters that is likely to have generated the example data, 
 \begin{align}
    \underbrace{\mathbb{P}_{\alpha |  \left\{y^{j}, z^{j}\right\}_{j=1}^{J-1} }\left(\cdot \bigg| \left\{y^{j}, z^{j}\right\}_{j=1}^{J-1}\right) }_{\text{Posterior}} 
    \propto \mathbb{P}_\alpha(\cdot) \mathbb{P}_{ \left\{y^{j}, z^{j}\right\}_{j=1}^{J-1} | \alpha} = \underbrace{\mathbb{P}_\alpha(\cdot)}_{\text{Prior}} \,\underbrace{\prod_{j=1}^{J-1} \mathbb{P}_{y^j,z^j|\alpha}\left(y^{j}, z^{j} \bigg| \alpha\right)}_{\text{Likelihood function}}
 \end{align} 

In the language of in-context learning, the Bayesian posterior describes the `context' informed by the example condition-QoI pairs. The distribution of \emph{new} predictions under the posterior distribution, known as the \emph{posterior predictive distribution}, is obtained by integrating the likelihood with respect to the posterior distribution. The posterior predictive is the distribution of QoI $z^J$ conditioned on the example pairs $\{y^j, z^j \}_{j = 1}^{J-1}$ and the new condition $y^{(J)}$ 
\begin{align}\label{eq:iconbayes}
    \mathbb{Q}_{z^J | y^J, \{ (y^j, z^j)\}_{j = 1}^{J-1} }\left(\cdot \big| y^J, \{(y^j,z^j) \}_{j = 1}^{J-1}\right) = \int \underbrace{\mathbb{P}_{z^J | y^J, \alpha}(\cdot | y^J,\alpha)}_{\text{Likelihood}} \underbrace{\mathbb{P}_{\alpha | \{ (y^j, z^j)\}_{j = 1}^{J-1} }(d\alpha|\{ (y^j, z^j)\}_{j = 1}^{J-1} ) }_{\text{Posterior}}. 
\end{align}
Notice that the posterior predictive distribution \eqref{eq:iconbayes} is identical to the ICON conditional measure \eqref{eq:probabilistic_ICON}. Moreover, from Theorem~\ref{thm:ICON_conditional}, we know that ICON approximates the mean of \eqref{eq:probabilistic_ICON}, so this implies that ICON \emph{approximates the mean of the posterior predictive distribution}.

The Bayesian perspective, however, requires a prior distribution $\mathbb{P}_\alpha$ and a likelihood model $\mathbb{P}_{y,z|\alpha}$ to be defined \emph{a priori}, meaning that the posterior predictive distribution may change as one changes the prior or likelihood. This fact appears to be in conflict with what ICON accomplishes --- that is, ICON makes no assumption on the prior and does not require explicit access to a likelihood model. These objects are, in fact, handled \emph{implicitly} in ICON. The choice of prior and likelihood is actually made when generating the training data \eqref{eq:trainingdata}, i.e., the choice is being made when curating the collection of differential equations and conditions. This means that ICON has access to the prior and likelihood only through the samples. \cite{ye2024exchangeable} argues that in-context learning and transformers perform a type of empirical Bayes method, where pre-training is akin to learning the prior distribution, and fine-tuning performs posterior inference and prediction.

Moreover, as ICON directly approximates the posterior predictive distribution, it often does not explicitly work with the true latent variable $\alpha$. Rather, ICON implicitly learns a latent variable $\zeta = g(\alpha)$ that may be a function of the true parameter $\alpha$. This is most clear for ICON based on encoder-decoder transformer architectures \cite{yang2023context} where the input is encoded into some latent space. There is no unique latent variable --- many latent variables can produce the same posterior predictive. That is, we may more generally have 
\begin{align}\label{eq:postpred_differentlatents}
    \mathbb{Q}_{z^J| y^J, \{ (y^j, z^j)\}_{j = 1}^{J-1} }\left(\cdot \big| y^J ;  \{y^j, z^j \}_{j = 1}^{J-1} \right)  = \int {\mathbb{P}_{z^J| y^J, \zeta}(\cdot | y^J,\zeta)} {\mathbb{P}_{\zeta | \{y^j, z^j\}_{j = 1}^{J-1} }(d\zeta|  \{y^j, z^j\}_{j = 1}^{J-1})}.
\end{align}
Regardless of the choice of latent variables, this perspective shows ICON performs Bayesian inference \textit{implicitly} by a direct and explicit characterization of the conditional expectation of the posterior predictive distribution.

\paragraph{ICON implicitly performs amortized and likelihood-free inference}

We emphasize that ICON actually performs Bayesian inference in a \emph{amortized} and \emph{likelihood-free} manner. Inference involves learning the posterior distribution of parameters given observations. This process must be repeated whenever new observations arrive, which can be computationally expensive. \emph{Amortized inference} addresses this by learning a generative model that maps observations directly to posterior distributions \cite{ganguly2023amortized}. Although training such a model may be more costly than a single inference step, the expense is amortized across many possible future queries. Likelihood-free or \emph{simulation-based} inference methods \cite{Cranmer2019SBI, baptista2024conditional,hosseini2025conditional} characterize the posterior distribution over model parameters without requiring evaluation or even knowledge of the likelihood. One approach to amortized and likelihood-free inference is to construct a block-triangular generative model for the joint parameter-observation distribution and then exploit the structure of the generative model to sample from the conditional distribution of the parameters given observations \cite{baptista2024conditional}.

ICON is an example of amortized and likelihood-free inference in a distinct way: rather than repeatedly recomputing or even explicitly characterizing the posterior distribution, it maps observations directly to the posterior predictive distribution, bypassing the bottleneck of computing the posterior entirely. This is an advantage when one simply wants a predictive model without the need to fully characterize the posterior. Posterior distributions are often difficult to characterize, particularly when they are supported on a low-dimensional manifold or are highly nonconvex, which makes sampling from them difficult. In Section~\ref{sec:numerical}, we consider an ODE that exhibits \emph{non-identifiability}, meaning that multiple parameters may produce the same observational data. The resulting posterior then lies on a lower-dimensional manifold that is non-trivial to characterize. ICON bypasses all of these challenges by approximating the mean of the posterior predictive directly. We summarize the Bayesian explanation of deterministic in-context operator learning \cite{yang2023context,yang2023fine} in Figure~\ref{fig:iconbayes}. In Section~\ref{sec:genicon}, a generative formulation of ICON will be able to characterize uncertainties of the predictive model.

\begin{figure*}
    \centering
    \includegraphics[width = \textwidth]{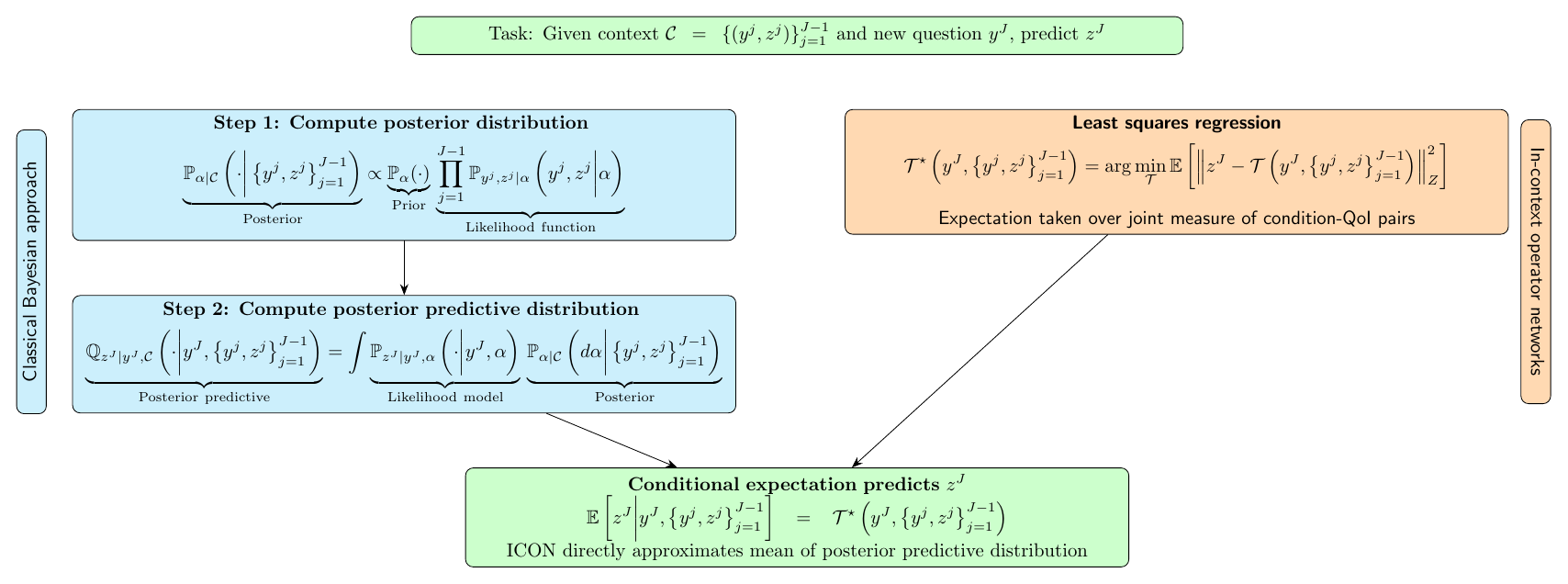}
    \caption{ICON is a model that performs Bayesian inference. Given a collection of example condition-QoI pairs and a new condition $y^J$, we predict $z^J$. The classical Bayesian approach first infers the model parameters from the example pairs and then characterizes the posterior predictive distribution of outputs and its mean. ICON approximates the mean in a single step by bypassing computing the posterior distribution. See Section~\ref{sec:nonidentifiable} for an explicit illustration of how ICON bypasses the need to sample from a degenerate posterior distribution. }
    \label{fig:iconbayes}
\end{figure*}

\section{Generative in-context operator learning} \label{sec:genicon}
We have shown that ICON approximates the expectation of the posterior predictive distribution of a QoI $z^J$ given a new condition $y^J$ and example condition-QoI pairs. A natural generalization is a \emph{generative model} that produces samples from the posterior predictive distribution. Generating from the PPD provides a way of \emph{quantifying uncertainties} in model predictions.

\subsection{Posterior predictive sampling via generative ICON} 

Producing samples from the posterior predictive is an instance of \emph{conditional generative modeling}. Let $X,Y,Z$ be separable Banach spaces, and let $x \sim \mathbb{P}_x$ be a reference random variable on $X$ so that $\mathbb{P}_x$ is easy to sample, e.g., a Gaussian random variable. Let $(y,z) \sim \mathbb{P}_{y,z}$ be a random variable with joint distribution $\mathbb{P}_{y,z}$ and let $\mathbb{P}_{z|y}(\cdot |y)$ denote the conditional distribution of $z$ given $y$. The task of conditional generative modeling is to construct a map $\mathcal{G}: X\times Y \to Z$ such that for each $y \in Y$, the pushforward of $\mathbb{P}_x$ under the map $\mathcal{G}(\cdot, y)$ equals the conditional distribution of $z$ given $y$: $\mathcal{G}(\cdot,y)_\sharp \mathbb{P}_x = \mathbb{P}_{z|y}(\cdot|y).$  The map is learned through data $\{(y_n,z_n)\}_{n = 1}^N$, where there are many algorithms that accomplish this task \cite{mirza2014conditional,baptista2024conditional,hosseini2025conditional,jacobsen2025cocogen}. The resulting map $\mathcal{G}$ is called a conditional generative model. 

The following theorem states the existence of a conditional generative model, which we refer to as \emph{generative ICON} (GenICON), that samples from the Bayesian posterior predictive distribution \eqref{eq:iconbayes}.

\begin{theorem}[Existence of Generative ICON]\label{thm:genicon_existence}

    Let $\eta \sim \mathbb{P}_\eta$ be a non-atomic reference random variable on separable Banach space H. There exists a measurable map
    \begin{align}
        \mathcal{G}: H \times Y \times (Y\times Z)^{J-1} \to Z
    \end{align}
    such that for every $(y^J, \{y^j,z^j \}_{j = 1}^{J-1})$, the pushforward measure of $\mathbb{P}_x$ under $\mathcal{G}(\cdot,y^J, \{ y^j,z^j\}_{j = 1}^{J-1})$ is equal to the posterior predictive distribution \eqref{eq:iconbayes}
    \begin{align*}
        \mathcal{G}(\cdot, y^J, \{y^j,z^j \}_{j = 1}^{J-1})_\sharp \mathbb{P}_x = \mathbb{Q}_{z^J|y^J,\{y^j,z^j \}_{j = 1}^{J-1}}. 
    \end{align*}
\end{theorem}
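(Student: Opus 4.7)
The plan is to obtain $\mathcal{G}$ in two stages: first disintegrate the ICON joint measure \eqref{eq:iconjointmeasure} to realize the posterior predictive as a probability kernel from the conditioning space $W := Y \times (Y\times Z)^{J-1}$ into $Z$; then invoke a noise-outsourcing (transfer) lemma to represent that kernel as the pushforward of $\mathbb{P}_\eta$ under a jointly measurable map on $H \times W$.

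For the disintegration, I rely on the fact that separable Banach spaces are Polish under their norm topology, so $Y$, $Z$, and finite products thereof are standard Borel. The existence theorem for regular conditional distributions with Polish target spaces (see, e.g., Theorem 8.5 and Section 8 of \cite{kallenberg2021foundations}) produces a probability kernel $\kappa: W \times \mathcal{B}(Z) \to [0,1]$ such that for $\mathbb{Q}^J$-almost every $w \in W$, $\kappa(w,\cdot)$ equals $\mathbb{Q}_{z^J|y^J,\{y^j,z^j\}_{j=1}^{J-1}}(\cdot|w)$. By the Bayesian identification in \eqref{eq:iconbayes}, this $\kappa$ is precisely the posterior predictive distribution from which $\mathcal{G}$ must sample.

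For the second stage, I apply the noise-outsourcing / transfer lemma (e.g., Theorem 8.17 of \cite{kallenberg2021foundations}): given any Borel kernel $\kappa$ between standard Borel spaces and any non-atomic reference measure $\mathbb{P}_\eta$ on a standard Borel space $H$, there exists a jointly Borel-measurable map $\mathcal{G}: H \times W \to Z$ with $\mathcal{G}(\cdot, w)_\sharp \mathbb{P}_\eta = \kappa(w, \cdot)$. The construction routes through the Borel isomorphism theorem — both $H$ (equipped with the non-atomic $\mathbb{P}_\eta$) and $Z$ are Borel-isomorphic to $[0,1]$ — followed by $w$-measurable inversion of the conditional distribution functions in the isomorphic coordinates; non-atomicity of $\mathbb{P}_\eta$ is precisely what allows any Borel probability on $Z$ to be realized as a pushforward.

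The main obstacle I anticipate is reconciling the $\mathbb{Q}^J$-almost-sure character of regular conditional probabilities with the pointwise ``for every $(y^J,\{y^j,z^j\}_{j=1}^{J-1})$'' quantifier in the theorem statement: the disintegration-and-outsourcing construction yields the pushforward identity only for $\mathbb{Q}^J$-a.e.\ $w$, and one must either fix a Borel version of $\kappa$ and read the conclusion modulo a $\mathbb{Q}^J$-null set, or redefine $\mathcal{G}$ on that null set (e.g., by composing with a fixed measurable selection into $Z$) to enforce the identity pointwise. A secondary technical point is that, because $Z$ is infinite-dimensional, the explicit block-triangular inverse-CDF constructions familiar from finite-dimensional conditional transport are unavailable; separability of $Z$ is the essential hypothesis that permits the argument to go through via Borel isomorphism rather than via a constructive transport map.
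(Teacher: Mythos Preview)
Your proposal is correct and follows essentially the same route as the paper: the paper also reduces to the Polish/standard Borel setting, invokes the isomorphism theorem for non-atomic probability measures (Theorem 9.4.7 in \cite{bogachev2007measure}, Theorem 2.18 in \cite{kallenberg2021foundations}) to pass from $(H,\mathbb{P}_\eta)$ to $([0,1],\mathbb{U}_{[0,1]})$, and then applies the kernel representation lemma (Lemma 4.22 in \cite{kallenberg2021foundations}) to represent the posterior predictive kernel as a pushforward, composing the two maps to obtain $\mathcal{G}$. Your flag about the $\mathbb{Q}^J$-a.e.\ versus pointwise quantifier is a legitimate technical wrinkle that the paper's proof simply glosses over by taking the posterior predictive as a globally defined kernel.
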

This statement follows from standard results in probability theory; see, for example, \cite{kallenberg2021foundations,bogachev2007measure}.
\begin{proof}
    The space $H$ being a separable Banach space implies it is Polish. By the isomorphism theorem for non-atomic probability measure (Theorem 9.4.7 in \cite{bogachev2007measure}, Theorem 2.18 in \cite{kallenberg2021foundations}), any non-atomic probability measure on $H$ can be mapped by a measurable bijection to the uniform measure on $[0,1]$ \footnote{See also the Borel isomorphism Theorem 1.8 of \cite{kallenberg2021foundations}. }. Therefore, there exists a measurable map $\tau: H \to [0,1]$ such that $\tau_\sharp \mathbb{P}_\eta = \mathbb{U}_{[0,1]}$, the uniform distribution on $[0,1]$. Then, since the posterior predictive distribution is a probability kernel, the kernel representation lemma (Lemma 4.22 and 3.2(vii) in \cite{kallenberg2021foundations}) provides a measurable map $f: [0,1] \times Y \times (Y\times Z)^{J-1}\to Z$ such that $f(\cdot,y^J, \{ y^j,z^j\}_{j = 1}^{J-1})_\sharp \mathbb{U}_{[0,1]} = \mathbb{Q}_{z^J|y^J,\{y^j,z^j \}_{j = 1}^{J-1}}$. Therefore we can define the generative ICON operator $\mathcal{G}$ by composing $\tau$ and $f$,
\begin{align*}
    \mathcal{G}(\eta, y^J, \{ y^j,z^j\}_{j = 1}^{J-1}) \coloneqq f(\tau(\eta) , y^J, \{ y^j,z^j\}_{j = 1}^{J-1})
\end{align*}
so that by construction, $\mathcal{G}(\cdot, y^J, \{ y^j,z^j\}_{j = 1}^{J-1} )_\sharp \mathbb{P}_\eta = \mathbb{Q}_{z^J|y^J,\{y^j,z^j \}_{j = 1}^{J-1}}$.
\end{proof}

\begin{remark}[Existence of (conditional) generative models in infinite dimensions]  Theorem~\ref{thm:genicon_existence} focuses on conditional generative modeling for the posterior predictive distribution, but the result applies more broadly: it guarantees the existence of (conditional) generative models for probability measures on infinite dimensional spaces. Although the result follows directly from classical results in probability theory, to our knowledge no prior work in infinite-dimensional generative modeling has stated it explicitly. Many recent approaches implicitly rely on Theorem~\ref{thm:genicon_existence} \cite{rahmangenerative,pidstrigach2024infinite,baptista2024conditional,lim2023score}. 
\end{remark}

The GenICON map produces samples from the ICON conditional distribution \eqref{eq:probabilistic_ICON} by mapping samples from a reference distribution $\eta \sim \mathcal{P}_\eta$ and input conditions and context given by $y^J, \{(y^j,z^j)\}_{j = 1}^{J-1}$ to a prediction $z^J$. We refer the original ICON as presented in \cite{yang2023context,yang2023fine,yang2024pde} simply as ICON. Since \eqref{eq:probabilistic_ICON} is precisely the Bayesian posterior predictive of $z^J$, generative ICON is a generative model for it. From the perspective of operator learning, generative ICON does not produce a single operator, rather it produces a family of operators indexed by the reference random variable $\eta$. The connection between the family of generative ICON operators and the original ICON is given in the following theorem.

\begin{theorem}[Mean GenICON is ICON]\label{thm:probicon}
Let $\mathcal{G}$ be a GenICON as defined in Theorem~\ref{thm:genicon_existence}. For each $(y^J, \{y^j,z^j \}_{j = 1}^{J-1}) \in Y\times (Y\times Z)^{J-1},$ the expected value of $\mathcal{G}(\cdot, y^J, \{y^j,z^j \}_{j = 1}^{J-1})$ with respect to $\mathbb{P}_x$ coincides with the conditional expectation in Theorem~\ref{thm:ICON_conditional}
\begin{align}
 \int_X \mathcal{G}(x,y^J, \{y^j,z^j \}_{j = 1}^{J-1}) \mathbb{P}_x(dx) =  \mathcal{T}^\star\left(y^J; \{(y^j,z^j) \}_{j = 1}^{J-1} \right) \coloneqq \Ex\left[z^J \big| y^J, \{(y^j,z^j) \}_{j = 1}^{J-1} \right].
\end{align}
\end{theorem}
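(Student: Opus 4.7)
The plan is to reduce the claim to a change-of-variables (pushforward) computation, then identify the resulting integral as the conditional expectation of the posterior predictive distribution, which by \eqref{eq:iconbayes} coincides with the ICON conditional distribution \eqref{eq:probabilistic_ICON}; Theorem~\ref{thm:ICON_conditional} then identifies this conditional expectation with $\mathcal{T}^\star$, closing the loop.

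First I would fix an arbitrary tuple $(y^J, \{y^j,z^j\}_{j=1}^{J-1}) \in Y\times (Y\times Z)^{J-1}$ and write $\mu$ for the ICON posterior predictive $\mathbb{Q}_{z^J|y^J,\{y^j,z^j\}_{j=1}^{J-1}}(\,\cdot\,|\,y^J, \{y^j,z^j\}_{j=1}^{J-1})$. By the defining pushforward property in Theorem~\ref{thm:genicon_existence}, $\mathcal{G}(\cdot, y^J, \{y^j,z^j\}_{j=1}^{J-1})_\sharp \mathbb{P}_x = \mu$. Applying the Bochner-valued change-of-variables formula to the identity map on $Z$ then yields
\begin{align*}
\int_X \mathcal{G}(x, y^J, \{y^j,z^j\}_{j=1}^{J-1}) \, \mathbb{P}_x(dx) = \int_Z w \, \mu(dw).
\end{align*}
By \eqref{eq:iconbayes}, $\mu$ equals the ICON conditional distribution \eqref{eq:probabilistic_ICON}, so the right-hand side is by definition the Bochner conditional expectation $\Ex[z^J \mid y^J, \{(y^j,z^j)\}_{j=1}^{J-1}]$, and Theorem~\ref{thm:ICON_conditional} identifies it with $\mathcal{T}^\star(y^J; \{(y^j,z^j)\}_{j=1}^{J-1})$.

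The main obstacle is technical rather than conceptual: Bochner integrability of the identity map under $\mu$, which is required both to apply the change-of-variables formula and to interpret the conditional expectation appearing in Theorem~\ref{thm:ICON_conditional}. The $L^2(\mathbb{Q}^J;Z)$ hypothesis from Theorem~\ref{thm:ICON_conditional}, together with disintegration and Fubini (Theorems~1.25 and~1.27 of \cite{kallenberg2021foundations}), supplies a finite first Bochner moment of $\mu$ for $\mathbb{Q}^{J-1}$-almost every conditioning tuple. A truly pointwise ``for each'' statement as written therefore either needs to be weakened to ``for almost every,'' or a regularity condition must be imposed on the measurable version of $\mathcal{G}$ produced in Theorem~\ref{thm:genicon_existence} (e.g., requiring $x \mapsto \mathcal{G}(x,\cdot)$ to be strongly measurable and Bochner $\mathbb{P}_x$-integrable on the relevant conditioning set). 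Once these measurability and integrability matters are handled, the remainder of the argument is the short pushforward calculation above combined with Theorem~\ref{thm:ICON_conditional}.
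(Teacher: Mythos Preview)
Your proposal is correct and follows the same approach as the paper, which simply states that the result ``follows immediately by the definition of a pushforward measure.'' Your additional discussion of Bochner integrability and the ``for each'' versus ``for almost every'' subtlety is a valid technical refinement that the paper does not address, but the core argument is identical.
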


\begin{proof}
    This follows immediately by the definition of a pushforward measure. 
\end{proof}

We emphasize that Theorems~\ref{thm:genicon_existence} and~\ref{thm:probicon} are agnostic to the specific choice of generative modeling algorithm. While we train  GenICONs using GANs in our numerical examples in Section~\ref{sec:numerical}, one may consider other generative models on function spaces \cite{lim2023score,hagemann2025multilevel,pidstrigach2024infinite,rahmangenerative}. In Section~\ref{sec:icon_gan} we provide algorithmic details describing the training of a conditional GANs from data generated for ICON.

\paragraph{Generative ICON performs uncertainty quantification.} 
Generative ICON provides uncertainty quantification without additional cost by characterizing the distribution of predictions. By repeatedly sampling from the reference distribution with the same condition $y^J$ and context, GenICON produces an empirical distribution for the posterior predictive distribution from which we can estimate the predictive mean, higher moments, as well as distributions of other observables. Like ICON, GenICON is a likelihood-free and amortized inference method that can produce posterior predictive samples given context without the need to recompute the posterior distribution each time there are new observations. Figure~\ref{fig:icongen} illustrates the capabilities of GenICON in comparison to standard ICON. ICON only produces the posterior predictive mean, which is a single function, while GenICON produces samples which can be used to not only characterize the mean but also the \emph{uncertainty} surrounding the predictive mean.  In a sense, GenICON can be viewed as a generalization of Gaussian processes, which are used to make new predictions given only a new input and data given in ordered pairs \cite{williams2006gaussian}.

\begin{figure*}
    \centering
    \includegraphics[width = \textwidth]{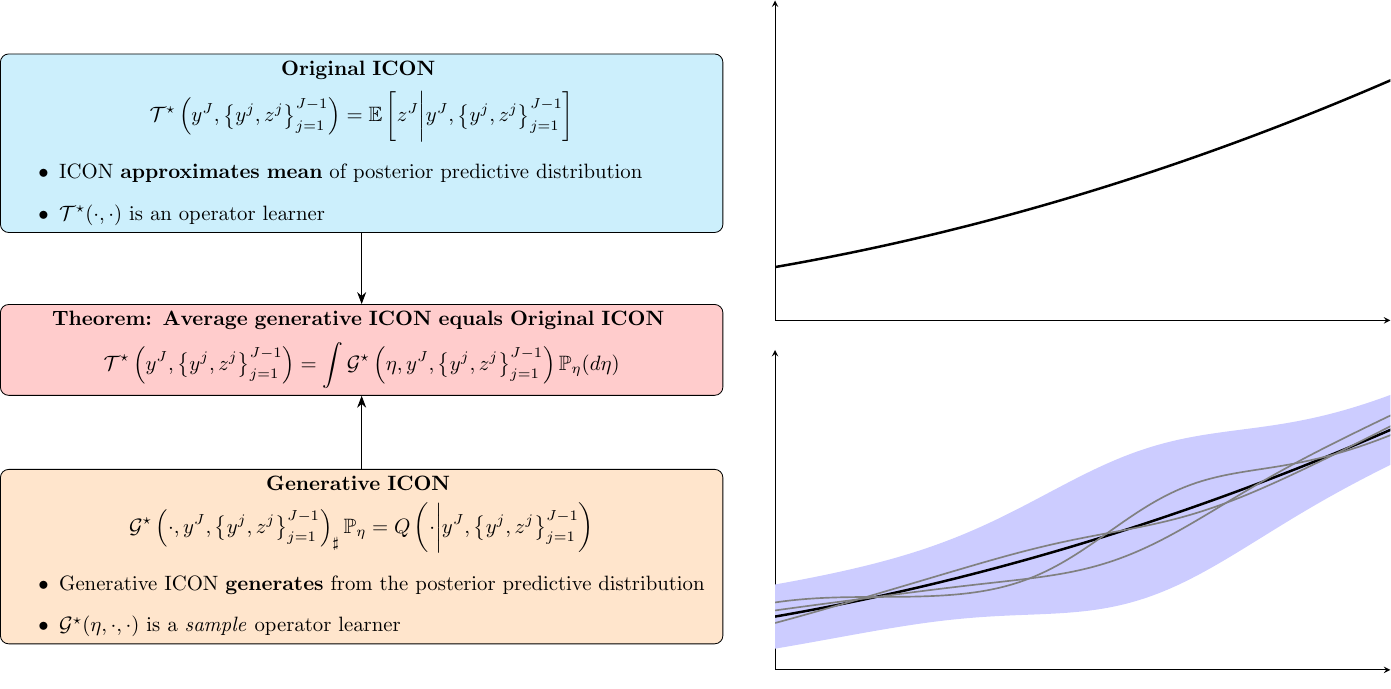}
    \caption{Illustration of the capabilities of ICON vs generative ICON. Standard ICON returns a single solution, which represents the approximation of the mean of the Bayesian posterior predictive distribution. Generative ICON characterizes uncertainty of the mean solution through samples from the posterior distribution, which is represented by the gray paths. The blue region represents the variance of the posterior predictive around the mean solution.    }
    \label{fig:icongen}
\end{figure*}

\begin{remark}[When should one use GenICON over ICON?]
If we are only interested in the first moment of \eqref{eq:probabilistic_ICON},
i.e.  the conditional expectation \eqref{eq:conditional_expectation_ICON},
then Theorem~\ref{thm:ICON_conditional} shows we do not need to consider a full probabilistic formulation, and the least squares regression \eqref{eq:icontraining} is sufficient. If however, we need probabilistic information, such as uncertainty quantification bounds for predictive guarantees for the conditional expectation predictions, then we need the full probabilistic formulation.
The full probabilistic formulation is necessary for numerous other critical tasks, e.g. formulating and solving inverse problems using ICON, or carrying out  model fine-tuning with UQ capabilities.
\end{remark}

\subsection{A generative ICON model based on generative adversarial networks}  
\label{sec:icon_gan}

We detail one approach for constructing a generative model for sampling from the posterior predictive distribution \eqref{eq:iconbayes}. Suppose we are given the training data in the form of \eqref{eq:icontraining}. To simplify presentation, denote the context, i.e., example condition-QoI pairs, by $\mathbf{c} = \{y^j,z^j \}_{j = 1}^{J-1}$. Let $\mathcal{G}_\theta: H \times  (Y\times Z)^{J-1} \to Z$ be a family of conditional generators parametrized by $\theta$ that maps reference samples $\eta \sim \mathbb{P}_\eta$ and context $\mathbf{c}$ to an output sample $z^J_\theta = \mathcal{G}_\theta(\eta, \mathbf{c})$. The map induces a conditional measure $\mathbb{Q}_{ z^J_\theta| \mathbf{c}; \theta}(\cdot|\mathbf{c}) = \mathcal{G}_\theta(\cdot,\mathbf{c})_\sharp \mathbb{P}_\eta$, as well as a parametrized family of joint measures $\mathbb{Q}_{\mathbf{c}, z^J_\theta; \theta}(\cdot) = \mathbb{Q}_{ z^J_\theta| \mathbf{c}; \theta}(\cdot|\mathbf{c})\otimes \mathbb{Q}_{\mathbf{c}}(\cdot)$, which will approximate the joint measure $\mathbb{Q}^J(\cdot)$ in \eqref{eq:iconjointmeasure}. 

To find the best approximation, we minimize a probability divergence from $\mathbb{Q}_{\mathbf{c},z_\theta^J;\theta}$ to $\mathbb{Q}^J$; we choose to minimize the $(f,\Gamma)$-divergence which interpolates between $f$-divergences (such as KL divergences) and integral probability metrics (such as the Wasserstein-1 distance) \cite{birrell2022f}. These divergences are able to stably compare distributions that are mutually singular and avoid mode collapse when training GANs. The function $f: \R \to [0,\infty)$ is a convex univariate function and $\Gamma$ is a function space. The $(f,\Gamma)$-divergence is defined through a variational representation, which is also more amenable to computation. For notational convenience, let $\mathbb{Q}_\theta = \mathbb{Q}_{\mathbf{c}, z^J_\theta; \theta}$, the $(f,\Gamma)$-divergence from $\bbQ_\theta$ to $\bbQ^J$ is
\begin{align}
    \mathfrak{D}^\Gamma_f(\mathbb{Q}^J \| \mathbb{Q}_{\theta}) = \sup_{\mathcal{D} \in \Gamma} \left\{\Ex_{\mathbb{Q}^J}\left[\mathcal{D}(\mathbf{c},z^J)\right] - \mathbb{E}_{\mathbb{Q}_{\theta}}\left[ (f^\star \circ \mathcal{D})(\mathbf{c},z^J) \right] \right\}, 
\end{align}
where $f^\star$ is the Fenchel dual of $f$, i.e., $f^\star(p) = \sup_{x \in \R}\left[xp - f(x) \right]$. In the context of generative adversarial networks (GANs), the function $\mathcal{D}$ is interpreted as a discriminator. Since the random variables we consider are Banach space valued, $\mathcal{D}$ is an operator between Banach spaces and $\Gamma$ is a space of operators. In particular, here we have $\mathcal{D}: (Y\times Z)^J \to \R$. 

Notice that both the joint distribution $\mathbb{Q}^J(\cdot) = \mathbb{Q}_{z^J | \mathbf{c}}(\cdot|\mathbf{c})\otimes\mathbb{Q}_{\mathbf{c}}(\cdot)$ and the approximating family $\mathbb{Q}_{\mathbf{c},z_\theta^J;\theta}(\cdot) = \mathbb{Q}_{\mathbf{c},z_\theta^J;\theta}(\cdot | \mathbf{c})\otimes \mathbb{Q}_{\mathbf{c}}(\cdot)$  have the same marginal $\mathbb{Q}_\mathbf{c}$ so that we may write the divergence as
\begin{align*}
    \mathfrak{D}^\Gamma_f(\mathbb{Q}^J \| \mathbb{Q}_\theta) = \sup_{\mathcal{D} \in \Gamma}\left\{\Ex_{\mathbb{Q}_\mathbf{c}}\left[ \Ex_{\mathbb{Q}_{z^J|\mathbf{c}}}\left[\mathcal{D}(\mathbf{c}, z^J)\right] - \Ex_{\mathbb{Q}_\theta(z^J|\mathbf{c})}\left[ (f^\star \circ \mathcal{D})(\mathbf{c}, z^J) \right] \right] \right\}.
\end{align*}
To train the generator $\mathcal{G}_\theta$, we parametrize the discriminator by some family parametrized by $\phi$ and we have
\begin{align}
    \min_\theta\mathfrak{D}^\Gamma_f(\mathbb{Q}^J \| \mathbb{Q}_\theta) = \min_\theta \max_{\phi}\left\{\Ex_{\mathbb{Q}_\mathbf{c}}\left[ \Ex_{\mathbb{Q}_{z^J|\mathbf{c}}}\left[\mathcal{D}_\phi(\mathbf{c}, z^J)\right] - \Ex_{\mathbb{P}_\eta}\left[ (f^\star \circ \mathcal{D}_\phi)(\mathbf{c},\mathcal{G}_\theta\left(\eta, \mathbf{c}\right)) \right] \right] \right\}.
\end{align}
Given training samples of the form \eqref{eq:icontraining}, $\left\{\{(y^j_m,z^j_m) \}_{j = 1}^{J}\right\}_{m = 1}^M = \{ (\mathbf{c}_m, z^J_m)\}_{m = 1}^M$, the minimax problem is approximated as 
\begin{align}
    \min_\theta\max_\phi\left\{\frac{1}{M}\sum_{m = 1}^M\mathcal{D}_\phi(\mathbf{c}_m, z^J_m) - \frac{1}{M}\sum_{m = 1}^M (f^\star \circ \mathcal{D}_\phi)(\mathbf{c}_m,\mathcal{G}_\theta\left(\eta_m, \mathbf{c}_m\right)) \right\} \label{eq:genicon_objective},
\end{align}
where $\eta_m \sim \mathbb{P}_\eta$. 

In our experiments, we choose $f(x) = x\log x$ ($f^\star(p) = e^{p-1}$), which corresponds with the forward KL divergence, and $\Gamma$ to be the space of $L$-Lipschitz operators. In practice, the Lipschitz constraint stabilizes the training of GANs \cite{arjovsky2017wasserstein,birrell2022f} and is implemented through a one-sided penalty \cite{birrell2022f}. The resulting divergence is known as the Lipschitz-regularized KL divergence, which is capable of comparing distributions that are mutually singular. The full training objective is as follows
\begin{align}
    \min_\theta \mathfrak{D}_{\mathrm{KL}}^\Gamma( \bbQ^J \,\|\, \bbQ_\theta) \approx 
    \min_\theta \max_{\phi } \Bigg\{
    &\frac{1}{M} \sum_{m = 1}^M \mathcal{D}_\phi(\bfc_m, z^J_m) 
    - \frac{1}{M} \sum_{m = 1}^M \exp\Big( \mathcal{D}_\phi(\bfc_m, \mathcal{G}_\theta(\eta_m, \bfc_m)) - 1 \Big) \notag \\
    &- \lambda \cdot \frac{1}{M} \sum_{m = 1}^M \max\left(0, \frac{\|\nabla_{z^J} \mathcal{D}_\phi({\bfc}_m, \tilde{z}^J_m)\|^2}{L^2} - 1 \right)
    \Bigg\},
\end{align}
where:
\begin{itemize}
    \item $(\tilde{\bfc}_m, \tilde{z}^J_{m}) \sim \rho$ are samples used to estimate the gradient penalty, typically obtained by interpolating between real and generated samples;
    \item $\lambda > 0$ is a hyperparameter controlling the penalty strength; and
    \item $L > 0$ is the target Lipschitz constant.
\end{itemize}

The components of the model are implemented as follows: 
\begin{itemize}
\item The \textbf{generator} $\mathcal{G}_\theta(\eta, \bfc)$ is a transformer-based, decoder-only neural network that maps reference samples $\eta \sim \mathbb{P}_\eta$ and input context $\bfc$ to predicted samples $z^J_\theta$, aiming to approximate the posterior predictive distribution $\bbQ_{z^J_\theta|\bfc ;\theta}(\cdot|\bfc)$;
\item The \textbf{discriminator} $\mathcal{D}_\phi(\bfc, z^J)$ is a transformer-based, decoder-only neural network that takes input context $\bfc$ and a predicted value $z^J$ as input, and outputs a scalar score for each $z^J$; and

    \item The \textbf{gradient penalty} is a Lipschitz regularization term,
    \[
        \max\left\{0, \frac{\|\nabla_{z^J} \mathcal{D}_\phi(\tilde{\bfc}, \tilde{z}^J)\|^2}{L^2} - 1\right\},
    \]
    is computed on interpolated samples $(\tilde{\bfc}, \tilde{z}^J)$. This term only penalizes the discriminator when the local gradient norm exceeds $L$, instead of strictly enforcing an $L$-Lipschitz constraint. As a result, $\mathcal{D}_\phi$ is encouraged but not required to satisfy $\|\nabla_\bfZ \mathcal{D}_\phi\| \leq L$ everywhere. (See Section 4 of \cite{birrell2022f}.)
\end{itemize}
Overall, this training objective enforces approximate Lipschitz continuity of the discriminator and enables stable adversarial training. The objective promotes both fidelity to the target conditional distribution and regularity of the discriminator, thus preserving the theoretical properties of divergence-based GAN training.

\section{Numerical examples}
\label{sec:numerical}

We demonstrate the Bayesian interpretation of ICON as well as our novel generative ICON through some simple numerical examples. Sections~\ref{sec:nonidentifiable} and~\ref{sec:partialobs} demonstrate that ICON is robust to degenerate posteriors, model non-identifiability, and partial and noisy observations. These properties can be easily explained through a Bayesian inference. Sections~\ref{sec:genICONnoisyobs} and~\ref{sec:genICONrandombvp} demonstrate generative ICON and its ability to capture the posterior predictive distribution. Examples that use ICON use the same model architecture as described in \cite{yang2023context}, while the generative ICON model architecture and training is described in Appendix~\ref{sec:nn_structure}.

\subsection{Robustness of ICON to degenerate posteriors and non-identifiability}
\label{sec:nonidentifiable}

We consider variations of the forward and inverse problem for Example~\ref{ex:ode} through the Bayesian interpretation of ICON. We consider the same random ODE under two different parametrizations: 
\begin{align}
     \begin{cases}
    \frac{d}{dt}u(t,\omega) = \gamma_1(\omega)c(t,\omega) u(t,\omega) + \gamma_2(\omega), \\
    u(0,\omega) = u_0(\omega), 
    \end{cases} \begin{cases}
    \frac{d}{dt}u(t,\omega) = \eta_1(\omega) \eta_3(\omega)c(t,\omega) u(t,\omega) + \eta_2(\omega), \\
    u(0,\omega) = u_0(\omega).
    \end{cases}
\end{align}
We let the parameters be the coefficients: $\alpha(\omega) = (\gamma_1(\omega),\gamma_2(\omega)) \in \R^2$ and $\alpha(\omega) = (\eta_1(\omega),\eta_2(\omega),\eta_3(\omega)) \in \R^3$. In the forward problem, the conditions are $y(\omega) = (c(t,\omega), u_0(\omega) \in L^2(0,T) \times \R$ and the quantity of interest be $z(\omega) = u(t,\omega) \in L^2(0,T)$. In the inverse problem, the roles are reversed --- $y(\omega) = u(t,\omega) \in L^2(0,T)$ and $z(t,\omega) = c(t,\omega) \in L^2(0,T)$. For all cases, we let $T = 1$. Notice that in the 2-parameter problem, the parameters are exactly identifiable with only a single demo (example-QoI pair). Given $(c(t,\omega), u(t,\omega)$, the parameters $(\gamma_1,\gamma_2)$ are determined by solving the linear system
\begin{align}\label{eq:parameteridentify}
    \begin{bmatrix}
        u'(0) \\ u'(1)
    \end{bmatrix} = \begin{bmatrix}
        c(0)u(0) & 1 \\ c(1) u(1) & 1 
    \end{bmatrix} \begin{bmatrix}
        \gamma_1 \\ \gamma_2
    \end{bmatrix}.
\end{align}
On the other hand, notice that the 3-parameter case is \emph{not identifiable} as any combination of $\eta_1,\eta_3$ such that $\eta_1\eta_3 = \gamma_1^\star$ can result in the same differential equation no matter how many more examples are given. We first explore ICON's robustness to this type of non-identifiability and then study how Bayesian inference can explain this robustness.

We generate two synthetic datasets from the linear ODE in Equation~\eqref{eq:ex1}. The \textit{2-parameter (identifiable)} dataset uses
$\frac{d}{dt}u(t,\omega) = \gamma_1(\omega)\,c(t,\omega)\,u(t,\omega) + \gamma_2(\omega),$
while the \textit{3-parameter (non-identifiable)} dataset uses
$\frac{d}{dt}u(t,\omega) = \eta_1(\omega)\,\eta_3(\omega)\,c(t,\omega)\,u(t,\omega) + \eta_2(\omega).$ For the forward problem, we condition on the initial state \(u_0(\omega)\) and the input \(c(t,\omega)\), and predict the trajectory \(u(t,\omega)\).  For the  inverse problem, we invert this: conditioning on \(u(t,\omega)\) to recover \(c(t,\omega)\).  ICON is trained separately on each dataset by minimizing the loss defined in \eqref{eq:icontraining}, and both tasks are then evaluated on each dataset.
Table~\ref{tab:relative-errors-sci} reports the \emph{mean} relative error for each train$\to$test combination.  In all cases, forward and inverse errors are below \(2\times10^{-3}\).  

\begin{table}[ht]
  \centering
  \caption{Relative errors for forward and inverse problems of an ODE problem}
  \label{tab:relative-errors-sci}
  \begin{tabular}{lcccc}
    \toprule
     & \multicolumn{2}{c}{Forward Problem} & \multicolumn{2}{c}{Inverse Problem} \\
    Train $\rightarrow$ Test $\downarrow$
      & 3-parameter 
      & 2-parameter 
      & 3-parameter  
      & 2-parameter  \\
    \midrule
    Train on 2-parameter (identifiable)
      & $5.66\times10^{-4}$ & $9.00\times10^{-4}$
      & $1.55\times10^{-3}$ & $1.19\times10^{-3}$ \\
    Train on 3-parameter (non-identifiable)
      & $8.19\times10^{-4}$ & $1.51\times10^{-3}$
      & $9.10\times10^{-4}$ & $9.44\times10^{-4}$ \\
    \bottomrule
  \end{tabular}
\end{table}

\paragraph{Bayesian inference explains why ICON is robust to non-identifiability and degenerate posteriors.}
We characterize the Bayesian posterior and posterior predictive for the two problems. Let $\mathcal{F}: L^2(0,T) \times L^2(0,T) \to \R^2$ denote the linear mapping from $(c(t,\omega),u(t,\omega))$ to the true parameters $(\gamma_1^\star,\gamma_2^\star)$; this mapping is the solution of \eqref{eq:parameteridentify}. Similarly, let $\mathcal{L}: L^2(0,T) \times \R \times \R^2$ be the solution operator to the linear ODE that takes $(c(t,\omega),u_0(\omega),\gamma_1(\omega), \gamma_2(\omega))$ to $u(t,\omega)$. The posterior distribution for the identifiable problem is a delta distribution centered at the true value $\gamma_1^\star, \gamma_2^\star$. Assuming the support of the prior on $\gamma_1,\gamma_2$ contains the truth, we have
\begin{align} \label{eq:2parampost}
   \mathbb{Q}_{\gamma_1,\gamma_2 \mid c,u}\!\left(\,\cdot \,\middle|\, c(t,\omega), u(t,\omega)\right) 
&\propto \mathbb{P}_{u \mid c,\gamma_1,\gamma_2}\!\left(\,\cdot \,\middle|\, c(t,\omega), \gamma_1(\omega), \gamma_2(\omega)\right)\;
   \mathbb{Q}_{\gamma_1,\gamma_2}(\cdot) \nonumber \\
&= \delta_{\mathcal{F}(u(t,\omega),c(t,\omega))}(\cdot)\,\mathbb{P}_{\gamma_1,\gamma_2}(\cdot) \nonumber \\
&= \delta_{(\gamma_1^\star,\gamma_2^\star)}(\cdot),
\end{align}
where $\delta_x$ is a Dirac measure centered at $x$. Meanwhile, the non-identifiable problem will have a posterior distribution that is also degenerate but supported on a manifold of $(\eta_1,\eta_2,\eta_3) \in \R^3$ in which $\eta_1\eta_3 = \gamma_1^\star$. Again, assuming the support of the prior on $\eta_1, \eta_3$ contains the truth, we have
\begin{align} \label{eq:3parampost}
\mathbb{Q}_{\eta_1,\eta_2,\eta_3 \mid c,u}\!\left(\,\cdot \,\middle|\, c(t,\omega), u(t,\omega)\right) 
&\propto \mathbb{P}_{u \mid c,\eta_1,\eta_2,\eta_3}\!\left(\,\cdot \,\middle|\, c(t,\omega), \eta_1(\omega), \eta_2(\omega), \eta_3(\omega)\right)\;
   \mathbb{Q}_{\eta_1,\eta_2,\eta_3}(\cdot) \nonumber \\
&= \delta_{\{\,(\eta_1,\eta_2,\eta_3): \; \eta_1 \eta_3 = \gamma_1^\star,\; \eta_2 = \gamma_2^\star \,\}}(\cdot)\,\mathbb{P}_{\eta_1,\eta_2,\eta_3}(\cdot).
\end{align}
The probability mass of the posterior distribution will shift according to the prior distribution on $(\eta_1,\eta_2,\eta_3)$. The posterior predictive distribution is computed as follows
\begin{align}
\mathbb{Q}_{\tilde{u}\mid \tilde{c},\tilde{u}_0,(c,u)}(\cdot | \tilde{c}, \tilde{u}_0, c,u)
&= \int \mathbb{P}_{\tilde{u} \mid \tilde{c}, \tilde{u}_0, \gamma_1,\gamma_2}(\,\cdot \mid \tilde{c},\tilde{u}_0, \gamma_1, \gamma_2)\; \nonumber
   \mathbb{Q}_{\gamma_1,\gamma_2 \mid c,u}(d\gamma_1,d\gamma_2 \mid c,u) \\
&= \int \delta_{\,\mathcal{L}(\tilde{c},\tilde{u}_0,\gamma_1,\gamma_2)}(\cdot)\;
   \delta_{(\gamma_1^\star,\gamma_2^\star)}(d\gamma_1,d\gamma_2) \nonumber  \\
&= \delta_{\,\mathcal{L}(\tilde{c},\tilde{u}_0,\gamma_1^\star,\gamma_2^\star)}(\cdot).
\end{align}
A similar computation with the parameters $(\eta_1,\eta_2,\eta_3)$ will result in the same posterior predictive. We can see this easily by observing \eqref{eq:postpred_differentlatents} which states the same posterior predictive can arise from a variety of latent variables. Moreover, the same computation can be done to characterize the inverse problem --- the distribution of $\tilde{c}$ given $\tilde{u}, (c,u)$. In classical Bayesian inference, sampling from such a posterior distribution poses a challenge as sampling needs to occur \emph{exactly} along the low dimensional manifold. ICON directly approximates the mean of the posterior predictive which, in this case, is delta function, meaning that the mean matches the entire distribution, thereby bypassing this sampling challenge. Our numerical experiment confirms that ICON’s direct approximation of the mean of the posterior predictive --- and not the full posterior --- bypasses the bottleneck of computing the posterior.

\paragraph{Bayesian perspective informs the value of $J$.}

The exact characterization of the full Bayesian solution shows that there is no uncertainty in the posterior predictive when the likelihood is given exactly. ICON, however, learns the ODE and its operator in a likelihood-free manner, i.e., without knowledge of the exact form of the ODE system.  The exact characterization of the posterior and posterior predictive informs ICON should be able to approximate the conditional mean of the posterior predictive with only a single example-QoI pair. That is, we should choose $J = 1$. 

We train ICON on a dataset consisting solely of the ODE example~\ref{ex:ode}, using a standard regression training scheme. During training, the model is presented with in-context learning tasks in which the number of demo examples varies from 1 to 5, and each demo consists of 40–50 input-output pairs. After training, ICON is evaluated on its ability to perform in-context prediction for new ODE problems drawn from the same distribution. As shown in Figure~\ref{fig:ode_1demo}, the model successfully learns to perform operator regression in context. In particular, even when given only a single demo with as few as 3 points, ICON is able to produce accurate predictions for the QoI. The prediction quality improves with longer demo sequences, but the results demonstrate that ICON requires very little data to capture the underlying ODE.

\begin{figure}[h]
    \centering
    \begin{minipage}[b]{0.49\linewidth}
        \centering
        \includegraphics[width=\linewidth]{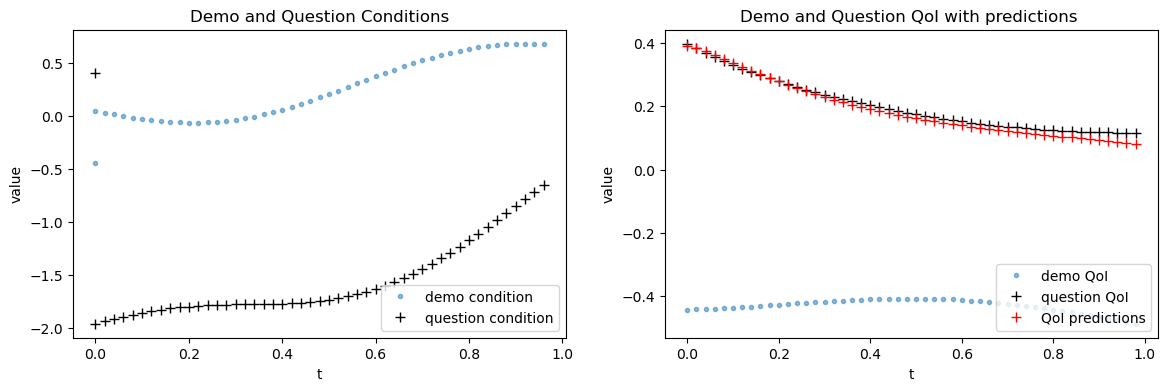}
        \caption*{(a) Demo of length $50$} % Optional: subcaption
    \end{minipage}
    \hfill
    \begin{minipage}[b]{0.49\linewidth}
        \centering
        \includegraphics[width=\linewidth]{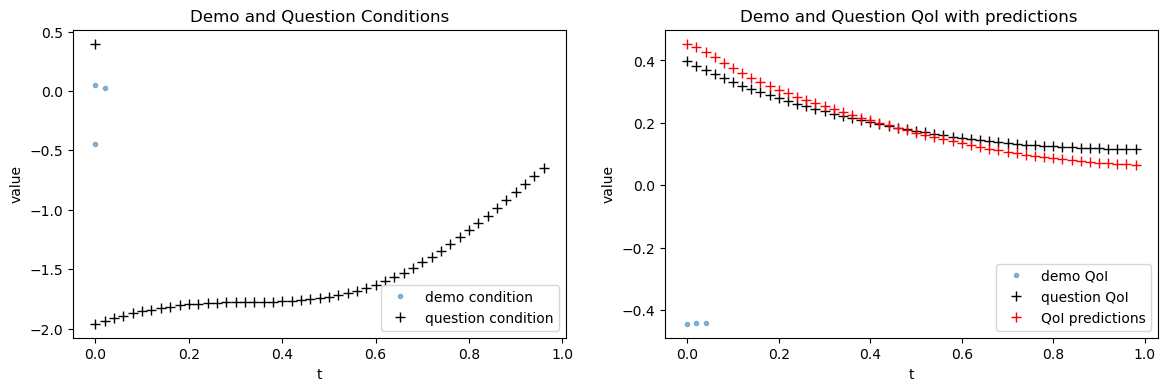}
        \caption*{(b) Demo of length $3$} % Optional: subcaption
    \end{minipage}
    \caption{ICON is able to recover the correct ODE behavior using only a single example condition-QoI pair. We consider the forward problem $u'(t) = 1.04696604\, c(t)\, u(t) + 0.07126787$, where the condition is $(c(t), u(0))$ and the QoI is $u(t)$ for $t \in (0,1)$. (a) Prediction using a demo with $50$ points and a question condition of $50$ points. (b) Prediction using a demo with only $3$ points (i.e., values of $u(t)$ at $t = 0, 0.02, 0.04$ and values of $c(t)$ at $t = 0, 0.02$), with the same $50$-point question condition. Despite the extremely short demo in (b), ICON is able to accurately predict the QoI, and performance improves with longer demos as seen in (a).}

    \label{fig:ode_1demo}
\end{figure}

\subsection{Robustness of ICON to partial and noisy observations}
\label{sec:partialobs}

We apply ICON to a variation of the reaction-diffusion problem (Example~\ref{ex:bvprd}) to demonstrate its robustness to partial and noisy observations. This robustness can be explained simply through the Bayesian perspective. On $x \in [0,1]$ we solve
\begin{align}
    \begin{cases}
-0.05\,a(\omega)\,u''(x,\omega) \;+\; k(x,\omega)\,u(x,\omega)
= c(\omega), \\
u(0,\omega)=u_0(\omega),\;u(1,\omega)=u_1(\omega).
\end{cases}
\end{align}
We let the function $\alpha(\omega) = k(x,\omega) \in L^2(0,1)$ be the parameter, let the boundary conditions, source term and parameter $a$ be the conditions $y(\omega) = (u_0(\omega), u_1(\omega), a(\omega), c(\omega)) \in \R^4$, and the quantity of interest be noisy observations of the solution at $M$ points on the domain, where $M$ is less than the total number of discretization points of the domain. That is, $ z(\omega) = \mathbf{u}(\omega) + \epsilon(\omega) \in \R^M$, where $\mathbf{u}(\omega)_i = u(x_i,\omega)$, and $\epsilon \sim \mathcal{N}(0,\sigma^2\mathbf{I})$. Given new conditions and noisy example-QoI pairs corresponding to the same parameter field $k(x,\omega)$, we will use ICON to predict the new solution. The Bayesian interpretation of ICON tells us that ICON needs to implicitly invert for the the parameter field $k(x,\omega)$ from the example pairs, which determines the solution operator, and apply that operator to new conditions.

The parameter field $k(x,\omega)$ is modeled as a softmax-transformed Gaussian process. To generate the training data, we sample $1000$ source terms $k(x,\omega)$; for each realization of $k(x,\omega)$ we generate $100$ samples of the source, parameter, and boundary conditions, by $(u_0,u_1) \sim \mathcal{U}([-1,1]^2)$, $a \sim \mathcal{U}[0.5,1.5]$, and $c \sim \mathcal{U}[-2,2]$. The PDEs are solved numerically on a uniform grid with spacing $\Delta x = 0.01$ (meaning the solution is approximated on $100$ grid points). The QoI is constructed by sampling the solution $u(x)$ at $M$ points, $M \in \{40,50 \}$, and then adding independent Gaussian noise with $\sigma = 0.1$ at each point.

We train ICON using a standard regression training scheme, i.e.  minimizing the loss function~\ref{eq:icontraining} by providing \emph{demo} pairs of the form $(y_m,z_m) = (u_{0m}, u_{1m}, a_m, c_m, \mathbf{u}_m + \epsilon_m)$, where $\epsilon_m \sim \mathcal{N}(0,0.1^2\mathbf{I})$. Notably, the data is corrupted by noise \emph{and} only contains partial observations of the solutions. Moreover, the partial observations are \emph{heterogeneous}, meaning that different demos have observations of the solution at different points in the domain. The demos pairs remain corrupted by noise when evaluating the model after training. We will see that ICON is able to return \emph{de-noised} solutions. 

Figure~\ref{fig:pde_partial_obs_example} shows a representative prediction: although ICON is never exposed to clean solution data during training, its prediction (red $+$) aligns closely with the true noise-free solution (black $+$). Figure~\ref{fig:pde_partial_obs_errors} reports the average relative error $\mathbb{E}\left[\|u_{\mathrm{pred}} - u_{ground\, truth}\|/\|u_{ground\, truth}\|\right]$. The error consistently stays below $2\%$ and decreases as more demonstrations are provided. These results highlight ICON’s robustness to noise and its ability to extract relevant structure from heterogeneous, partially observed data. The ability of ICON to return denoised predictions is clear with its conditional expectation characterization in Theorem~\ref{thm:ICON_conditional} --- with mean-zero additive noise, we have that 
\begin{align*}
    \Ex\left[z(x,\omega) | y^J, \{y^j,z^j \}_{j = 1}^{J-1} \right] &= \Ex\left[u(x,\omega) + \epsilon(\omega) | y^J, \{y^j,z^j \}_{j = 1}^{J-1}  \right]\\ & = \Ex[u(x,\omega) | y^J, \{y^j,z^j \}_{j = 1}^{J-1}].
\end{align*}
Unlike deterministic operator learning approaches, which require full and clean data to make reliable predictions, ICON's Bayesian perspective allows it to effectively learn from noisy and partial observations. It also shows ICON's ability to implicitly infer the high-dimensional parameter field $k(x,\omega)$ from limited and noisy context and demonstrates ICON's ability to do inference in challenging and more realistic PDE settings.

\medskip

\begin{figure}[h]
  \centering
  \begin{minipage}{0.49\linewidth}
    \centering
    \includegraphics[width=\linewidth]{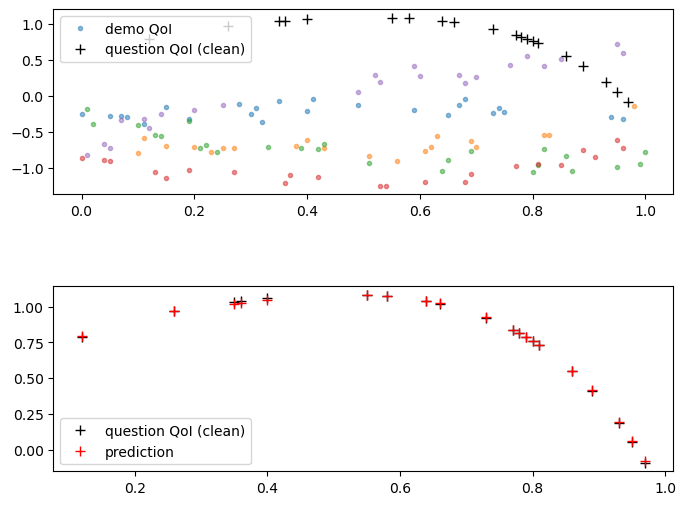}
    \caption{Visualization of in-context operator learning of the reaction–diffusion task with noisy QoIs. The model predicts QoIs aligning well with noise-free QoIs.}
    \label{fig:pde_partial_obs_example}
  \end{minipage}\hfill
  \begin{minipage}{0.49\linewidth}
    \centering
    \includegraphics[width=\linewidth]{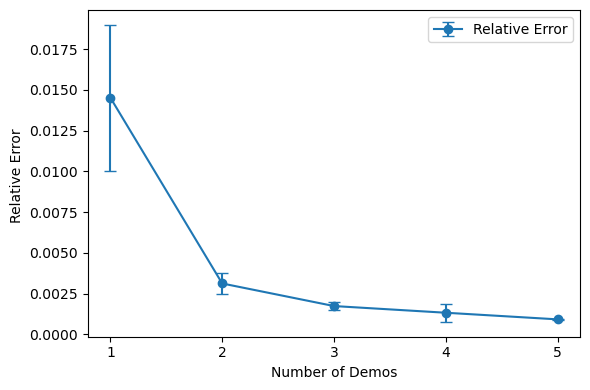}
    \caption{Average relative testing errors for the reaction–diffusion task versus demos per prompt.}
    \label{fig:pde_partial_obs_errors}
  \end{minipage}
\end{figure}

\subsection{Generative ICON on elliptic PDEs with noisy observations}
\label{sec:genICONnoisyobs}
We study a one-dimensional random Poisson equation on the domain \( x \in [0,1] \),
\begin{align}
    \begin{cases}
        -  u''(x,\omega) = k(x,\omega), \\
        u(0,\omega) = u_0(\omega), \quad u(1,\omega) = u_1(\omega).
    \end{cases}
\end{align}
We designate the source terms $y(\omega) = k(x,\omega) \in L^2(0,1)$ to be the conditions and the boundary conditions $\alpha(\omega) = (u_0(\omega), u_1(\omega)) \in \R^2$ to be the parameters. The quantity-of-interest is the solution $u(x,\omega)$ with a simple additive noise: $z(\omega) = u(x,\omega) + \epsilon(\omega) \in L^2(0,1)$, $\epsilon \sim \mathcal{N}(0,\sigma^2)$. The noise is constant across the spatial domain for each realization $\omega$. 

To construct the training dataset, we first sample $100$ realizations of the source term $k(x,\omega)$, each drawn from a softmax transformed Gaussian. For each source term, we independently generated $10$ boundary value pairs $(u_0,u_1) \sim \mathcal{U}([-1,1]^2)$, yielding $1000$ unique elliptic PDE problems. These equations are numerically solved to obtain solution $u(x,\omega)$  to which we add an offset given by Gaussian noise to generate the observed QoI $z(\omega)$. We train a GenICON to model the posterior predictive distribution of noisy solutions $z(\omega)$ conditioned on prompt consisting of context (demonstration condition-QoI pairs) and a condition: 
$
y^j, \left\{ (y^{j}, z^{j}) \right\}_{j=1}^{J-1}.
$

We generate three datasets $\mathcal{D}_\sigma$ with varying noise levels, \( \sigma \in \{0.1, 0.05, 0.025\} \), and train GenICON separately on each dataset. For comparison, we also train a standard ICON regression model on \( Q_{0.1}(y, z) \), which outputs a single deterministic prediction for each query. The purpose of this experiment is to assess GenICON's ability to recover both the posterior predictive mean and variance. When trained on \( \mathcal{D}_\sigma\), GenICON should produce samples whose variance matches the prescribed noise level \( \sigma \). In contrast, ICON, which only recovers the mean, cannot capture uncertainty. Figure~\ref{fig:genicon_sigma_comparison} shows posterior predictive samples generated by GenICON as well as the mean and variance estimated through samples. Notice that the true solution is within the uncertainty bands of GenICON. We also show that ICON is unable to denoise the prediction with only $J -1 = 5$ demos and that it is unable to express its uncertainty in its prediction.  

To quantitatively evaluate performance, we apply the model on a test dataset. For each prompt (with $5$ demos and $25$ evenly spaced pts in condition-QoI), we perform $1000$ posterior samplings and compute the empirical variance across the predicted QoIs at each spatial point. The final metric is obtained by averaging over all query points and test prompts summarized in Table~\ref{tab:icon_noisy_variance}. Notice that the estimated noise level provides a decent estimate for the true noise level, even though there are only $J - 1 = 5$ demos. Moreover, the overestimate of the noise levels may be the fact that the generative models based on minimizing the (Lipschitz-regularized) forward KL divergence tends to be moment capturing, rather than mode capturing, meaning the generative model will tend to have high variance than the true distribution.

\begin{figure}[htbp]
    \centering
    \begin{subfigure}[t]{0.24\linewidth}
        \centering
        \includegraphics[width=\linewidth]{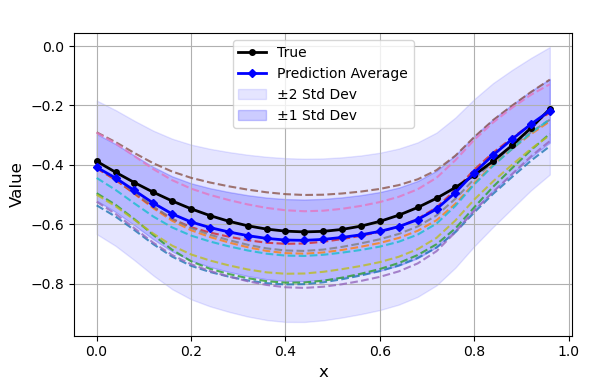}
        \caption{$\sigma = 0.1$}
        \label{fig:sigma01}
    \end{subfigure}
    \hfill
    \begin{subfigure}[t]{0.24\linewidth}
        \centering
        \includegraphics[width=\linewidth]{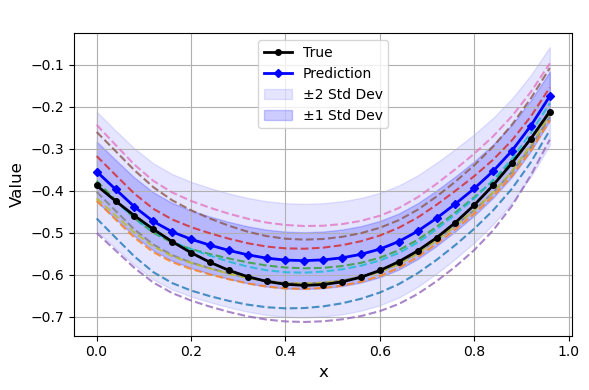}
        \caption{$\sigma = 0.05$}
        \label{fig:sigma005}
    \end{subfigure}
    \hfill
    \begin{subfigure}[t]{0.24\linewidth}
        \centering
        \includegraphics[width=\linewidth]{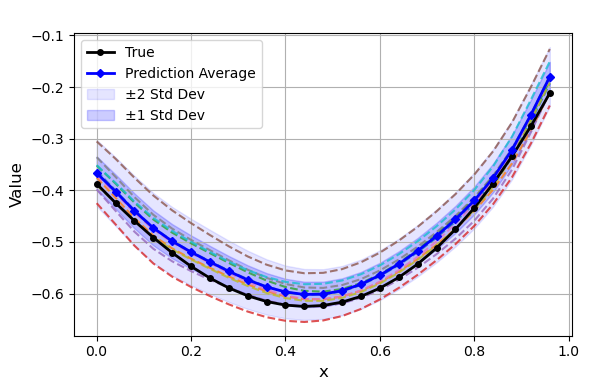}
        \caption{$\sigma = 0.025$}
        \label{fig:sigma0025}
    \end{subfigure}
    \hfill
    \begin{subfigure}[t]{0.24\linewidth}
        \centering
        \includegraphics[width=\linewidth]{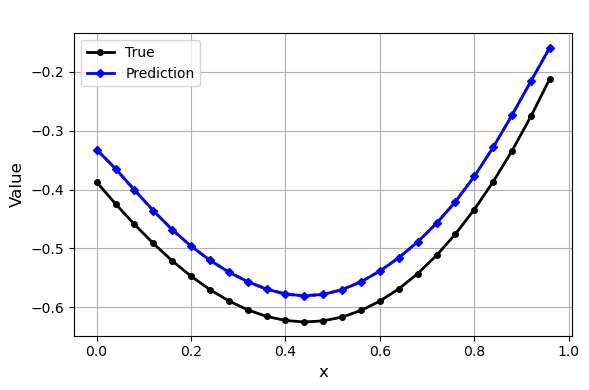}
        \caption{ICON regression, $\sigma = 0.1$}
        \label{fig:icon_reg}
    \end{subfigure}
    \caption{
    Posterior predictive samples generated by GenICON trained on \(\mathcal{D}_\sigma \) and evaluated on the corresponding test set with the same noise level \( \sigma \) for an elliptic differential equation operator with $u_0 = -0.38746364, u_1 = -0.14212306$. All predictions are conditioned on the same underlying differential equation, boundary conditions \( u_0(\omega), u_1(\omega) \) and same set of conditions-QoIs $\left(y^J, \{y^{j},z^{j} )\}_{j = 1}^{J-1}\right)$ for $J-1=5$ demos;  the  output variance varies in accordance with the training noise level $\sigma$. As \( \sigma \) decreases, the spread of the generated samples contracts accordingly, demonstrating GenICON's ability to calibrate uncertainty. In contrast, ICON regression returns a single deterministic solution regardless of noise.}
    \label{fig:genicon_sigma_comparison}
\end{figure}

\begin{table}[ht]
  \centering
  \footnotesize
  \caption{Prediction variance of GenICON (averaged across samples). The estimated noise level $\hat{\sigma}$ matches with the true noise levels $\sigma$. }
  \label{tab:icon_noisy_variance}
  \begin{tabular}{lcccc}
    \toprule
    Model \& dataset & GenICON \(\mathcal{D}_{0.1} \) & GenICON \( \mathcal{D}_{0.05}\) & GenICON \( \mathcal{D}_{0.025} \) & ICON \( \mathcal{D}_{0.1} \)\\
    \midrule
    Average Variance $\hat{\sigma}^2$
      & \(1.43 \times 10^{-2}\) 
      & \(3.81 \times 10^{-3}\) 
      & \(8.22 \times 10^{-4}\) 
      & -- \\
      {$\hat{\sigma}$}  & $0.119$ & $0.0617$ & $0.029$ & -- \\ 
    \bottomrule
  \end{tabular}
\end{table}

\subsection{Generative ICON on elliptic PDEs with random boundary conditions} 
\label{sec:genICONrandombvp}

We now consider a modified random elliptic problem to evaluate the robustness of GenICON under operator uncertainty due to random boundary conditions. Specifically, we study the Poisson equation on the domain $x \in [0,1]$,
\begin{align}
\begin{cases}
    - u''(x,\omega) = k(x,\omega), \, \\
    u(0,\omega) = u_0(\omega), u(1,\omega) = u_1(\omega),
    \end{cases}
\end{align}
where $u_0 \sim \mathcal{U}[-1,1]$ as before, but $u_1 | u_r \sim \mathcal{U}[u_r - 0.1, u_r + 0.1 ]$ and $u_r \sim \mathcal{U}[-1,1]$. The right boundary condition has a differently structured randomness than the previous example. There is no observational noise. In this example, the boundary conditions are the parameters, $\alpha(\omega) = (u_0,u_1) \in \R^2$, meaning that $(u_0,u_1)$ defines the operator instance. The source term is the condition, $y(\omega) = k(x,\omega) \in L^2(0,1)$, and the quantity-of-interest is the solution, $z(\omega) = u(x,\omega) \in L^2(0,1)$. 

The example condition-QoI pairs provide full trajectories $(k(x,\omega), u(x,\omega))$ which GenICON uses to infer the underlying operator-defining parameters $(u_0,u_1)$. In this example, we provide trajectories in which the left boundary condition is fixed, but the right boundary condition will be free, which means the inverse problem will be ill-posed and the solution remains non-deterministic as multiple operators corresponding with multiple right boundary conditions will be characterized. Therefore, a well-calibrated GenICON should output a \emph{distribution} of plausible solutions consistent with a fixed instance $k(x,\omega)$. 

To construct the dataset, we first sample $1000$ pairs of $(u_0,u_r) \sim \mathcal{U}([-1,1]^2)$. For each pair we generated $100$ problem instances by independently sampling a right boundary value $u_1 \sim \mathcal{U}(u_r-0.1, u_r+ 0.1)$, as well as a source term $k(x,\omega) \sim \text{softmax}(\mathcal{GP}(\mu(x), \sigma(x,x') ))$. Each resulting triple $(k(x), u_0,u_1)$ defines a distinct well-posed elliptic boundary value problem, which we solve numerically to obtain the solution $u(x)$. This process yields a dataset of $10^5$ labeled pairs $(k(x,\omega),u(x,\omega))$ which we use to train a GenICON. 

We assess GenICON's performance by observing the following three criteria. First, we examine whether the model can infer the correct operator-defining parameters from context (example condition-QoI pairs) with a fixed left boundary condition $u_0$ and fixed right boundary parameter $u_r$, as reflected by how closely the predicted QoI approximates the solution of the elliptic PDE given a new condition $k(x,\omega)$. In particular, we assess the recovery of the left boundary condition, which should be constant. Second, we evaluate GenICON's ability to generate a distribution of plausible solutions which should match with the posterior predictive distribution. Lastly, we evaluate the model's ability to capture the conditional structure of the right boundary $u_1$, which should vary between some fixed range $[u_r-0.1, u_r+0.1]$. A good GenICON should generate a distribution of plausible solutions rather than collapsing to a single point on the right boundary condition. 

We present our results in Figure~\ref{fig:genicon_poisson_freeur}. Figure~\ref{fig:poisson-free-ur-example} shows the posterior samples generated for a new source term and operator parameters $(u_0, u_r) = (-0.4764, 0.3894)$. All predicted solutions share the fixed left boundary condition while exhibiting variability at the right boundary condition. The sample mean at $x = 1$ closely aligns with the operator variability due to randomness at the right boundary condition. Figure~\ref{fig:poisson-free-ur-pt} quantifies the test error at both boundaries: predictions at $x = 0$ show low variance and error, indicating that GenICON accurately infers the left boundary conditions with low variance and error from even a single example condition-QoI pair. Predictions at $x = 1$ maintain high variance consistent with the training distribution, but gradually align with the expected mean as more examples are provided.

In Figure~\ref{fig:poisson-free-ur-pt} we report two evaluation metrics: physical consistency error and QoI prediction error. The physical consistency error quantifies how well the predicted solutions satisfy the governing elliptic PDE. Specifically, for a given $k(x,\omega)$ we predict QoI trajectory with a fixed right boundary condition. We then numerically solve an elliptic PDE with the same source term and boundary conditions and compare the discrepancy between the true solution and the predicted one. As the number of example condition-QoI pairs increase, GenICON gains more information about the underlying operator and the physical consistency error decreases accordingly. 

The QoI prediction error, on the other hand, measures how close the predicted solution is to a particular ground truth solution $\hat{u}(x)$ sampled from the data-generating distribution. Recall that in each in-context learning example, the prompt \(y^{J}, \{y^{j}, z^{j}\}_{j=1}^{J-1} \) is constructed from the same operator defined by the parameters $(u_0,u_r)$. However, due to the additional random structure in the boundary condition, there are many valid QoIs $z^J$ compatible with the same context. Therefore, the prediction error is inherently lower-bounded by the spread of the output distribution and cannot converge to zero. Instead the error saturates once the model captures the correct one-to-many mapping structure, representing the diversity of possible outputs.

\begin{figure}[htbp]
  \label{fig:genicon-poisson-free-ur}
    \centering
    \begin{subfigure}[t]{0.32\linewidth}
        \centering
        \includegraphics[width=\linewidth]{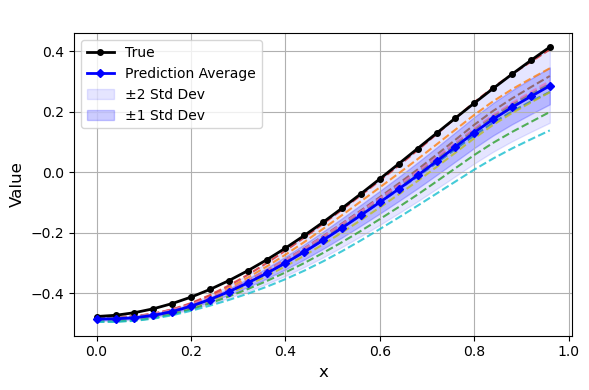}
        \caption{Example of QoI predictions}
        \label{fig:poisson-free-ur-example}
    \end{subfigure}
    \hfill
    \begin{subfigure}[t]{0.32\linewidth}
        \centering
        \includegraphics[width=\linewidth]{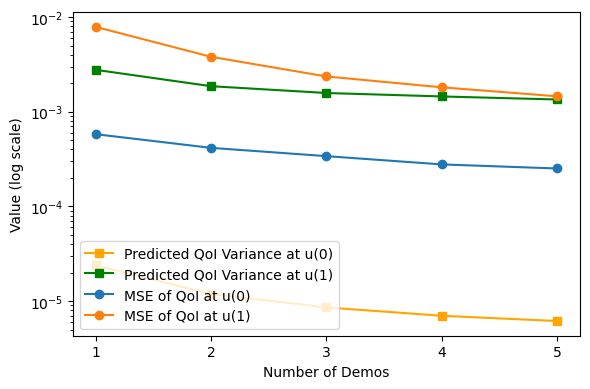}
        \caption{Test error at boundary points.}
        \label{fig:poisson-free-ur-pt}
    \end{subfigure}
    \hfill
    \begin{subfigure}[t]{0.32\linewidth}
        \centering
        \includegraphics[width=\linewidth]{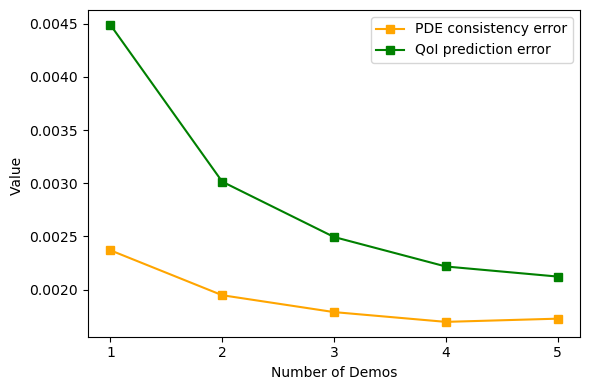}
        \caption{Test error of the interval}
        \label{fig:poisson-free-ur-stat}
    \end{subfigure}

    \caption{
GenICON predictions on elliptic PDEs with a random right boundary. (a) Predicted QoIs reflect boundary uncertainty: solutions agree at \( x = 0 \) but vary at \( x = 1 \). Dashed lines are sample trajectories which are used to compute the uncertainty bands. (b) Test errors at boundary points decrease with more demos; prediction at \( x = 1 \) remains diverse but aligns with the expected mean. (c) Physical consistency and QoI prediction errors improve with additional demos but saturate due to irreducible output variability.
}
    \label{fig:genicon_poisson_freeur}
\end{figure}

\section{Conclusion and outlook}

This paper develops a \emph{probabilistic} framework for operator learning and foundation models for ordinary and partial differential equations by formulating them in terms of \emph{random differential equations}.  Training data for operator learning then arises as sample realizations of random parameters, initial/boundary conditions, and solutions of RDEs. The resulting framework, \emph{probabilistic operator learning}, provides a unified framework for studying operator learning and foundation models. 

We apply the RDE formalism to interpret in-context operator learning as implicit Bayesian inference for differential equations. ICON can be understood as directly computing the posterior predictive mean of solutions given example condition-solutions, bypassing the need to explicitly characterize the posterior distribution. This makes ICON a form of likelihood-free, amortized inference. To our knowledge, this is the first formulation of in-context learning on Banach and Hilbert spaces, which are the natural settings for differential equations. The Bayesian perspective also explains ICON's robustness in inverse problems and under challenges such as non-identifiability, noisy data, partial observations. 

We further introduced a \emph{generative formulation of ICON} which extends ICON beyond approximating conditional expectations through characterizing the full posterior predictive distribution. We proved a general existence theorem for conditional generative models on infinite-dimensional spaces and showed that the expected GenICON operator recovers the deterministic ICON operator. 

Looking forward, we highlight several avenues for future investigation: 
\begin{itemize}
    \item \textbf{Errors and limitations of GenICON.} Our method assumes the GenICON model exactly characterizes the posterior predictive, but practical implementations incur approximation and finite-sample error. Quantifying these errors and assessing robustness of these generative models are important next steps for trustworthy application to scientific and engineering problems. Moreover, GenICON modeling demands significantly more data and computational resources than ICON, making it crucial to understand the sample complexity of GenICON.
    
    \item \textbf{\emph{Maximum a posteriori} estimates.} The posterior predictive mean may not always be the most informative statistic, especially when the PPD is multimodal --- for example, when multiple differential equations can explain the same context. Learning the posterior predictive \emph{mode} directly by, for example, training with mode-seeking divergences (e.g., reverse KL) or by maximizing the PPD through generative models may yield more informative statistics. 
    
    \item \textbf{Structure-preserving neural architectures. } GenICON architectures that respect PDE structure, such as Fourier neural operators, Galerkin-based neural architectures, and, given the RDE perspective, \emph{polynomial chaos}-based architectures \cite{xiu2002wiener,sharma2025polynomial,arampatzis2025generative} may be used to enhance the transformer architecture. 

    \item \textbf{Probabilistic formulations of other foundation models.} Our random differential equations formulation can also be applied to understand recent multi-operator learning methods such as POSEIDON \cite{herde2024poseidon} and PROSE \cite{liu2024prose}. Placing multi-operator learning methods in a probabilistic setting opens the door to systematic analyses of sample complexity and robustness. 
\end{itemize}
We hope that the probabilistic operator learning framework, together with the generative formulations of ICON, will contribute to advancing trustworthy scientific machine learning.

\bibliographystyle{unsrt}
\bibliography{pnas-sample}
\clearpage
\appendix

\section{Generative ICON architecture and training}\label{sec:nn_structure}

We adopt a GAN framework where both the generator and discriminator are decoder-only Transformers with identical architecture, summarized in Table~\ref{tab:transformer-config}. Each input token is projected into a 128-dimensional embedding space, combined with learned positional encodings, and processed by a 6-layer Transformer with causal masking. The generator additionally prepends a projected latent noise vector \( z \in \mathbb{R}^{8 \times 32} \) to the token sequence before Transformer processing, enabling sample diversity. The discriminator processes the same type of data sequence but without noise input. A final linear head maps the Transformer output to scalar predictions. Each model contains approximately 3.9 million parameters.

We utilize the AdamW optimizer with a warmup–cosine decay learning rate schedule to train the GAN model, using different momentum coefficients for the generator and discriminator optimizers, as detailed in Table~\ref{tab:optimizer-config}.

\begin{table}[h]
\centering
\caption{Transformer Configuration for Generator and Discriminator}
\label{tab:transformer-config}
\begin{tabular}{l c}
\toprule
\textbf{Component} & \textbf{Value} \\
\midrule
Number of Layers & 6 \\
Heads in Multi-Head Attention & 8 \\
Input/Output Dimension & 128 \\
Feedforward Hidden Dimension & 512 \\
Total Parameters (each) & 3.9M \\
\bottomrule
\end{tabular}
\end{table}

\begin{table}[h]
\centering
\caption{Optimizer and Learning Rate Schedule Configuration}
\label{tab:optimizer-config}
\begin{tabular}{lcc}
\toprule
\textbf{Component} & \textbf{Generator} & \textbf{Discriminator} \\
\midrule
Optimizer & AdamW &  AdamW \\
Peak Learning Rate & \multicolumn{2}{c}{$2 \times 10^{-5}$} \\
Learning Rate Schedule & \multicolumn{2}{c}{Warmup + Cosine Decay} \\
Warmup Percent & \multicolumn{2}{c}{1\%} \\
Gradient Norm Clipping & \multicolumn{2}{c}{10.0} \\
$\beta_1$ & 0.5 & 0.0 \\
$\beta_2$ & \multicolumn{2}{c}{0.9} \\
Gradient Penalty & -- & 0.1 \\
\midrule
Training Ratio (Disc:Gen) & \multicolumn{2}{c}{5:1} \\
Training Steps & \multicolumn{2}{c}{100{,}000} \\
Batch Size & \multicolumn{2}{c}{64} \\
\bottomrule
\end{tabular}
\end{table}

\end{document}